\theoremstyle{definition}
\newtheorem{definition}{Definition}[section]
\newtheorem{lemma}{Lemma}
\newtheorem{theorem}{Theorem}
\newtheorem{remark}{Remark}
\newtheorem{proposition}{Proposition}
\newcommand{\mbbP}{\mathbb{P}}
\newcommand{\mbbQ}{\mathbb{Q}}
\newcommand{\R}{\mathbb{R}}
\newcommand{\CS}{\textbf{CS}}
\newcommand{\PD}{\textbf{PD}}
\newcommand{\CSPD}{\textbf{CSPD}}
\newcommand{\gCSPD}{\textbf{g-CSPD}}
\begin{document}

\twocolumn[

\aistatstitle{Conformal Prediction Under Generalized Covariate Shift with Posterior Drift}

\aistatsauthor{ Baozhen Wang \And Xingye Qiao}

\aistatsaddress{Binghamton University \And Binghamton University} ]

\begin{abstract}
  In many real applications of statistical learning, collecting sufficiently many training data is often expensive, time-consuming, or even unrealistic. In this case, a transfer learning approach, which aims to leverage knowledge from a related source domain to improve the learning performance in the target domain, is more beneficial. There have been many transfer learning methods developed under various distributional assumptions. In this article, we study a particular type of classification problem, called conformal prediction, under a new distributional assumption for transfer learning. Classifiers under the conformal prediction framework predict a set of plausible labels instead of one single label for each data instance, affording a more cautious and safer decision. We consider a generalization of the \textit{covariate shift with posterior drift} setting for transfer learning. Under this setting, we propose a weighted conformal classifier that leverages both the source and target samples, with a coverage guarantee in the target domain. Theoretical studies demonstrate favorable asymptotic properties. Numerical studies further illustrate the usefulness of the proposed method.
\end{abstract}

\section{INTRODUCTION}
\label{introduction}

%\vspace{-1em}
Machine learning has achieved great success in many applications, but still has limitations in practice. Ideally, there should be abundant labeled training data that share the same distribution as the test data. However, collecting sufficient training data is often expensive, time-consuming, or even unrealistic. Though semi-supervised learning can alleviate the reliance on labeled training data, in many cases, even unlabeled data are difficult to collect. 

Humans are capable of transferring knowledge across domains \citep{swarup2006cross}. For example, those who have learned the piano can learn the violin faster than others; those who speak a language can learn its dialect faster than those who don't. Inspired by humans' capabilities to transfer knowledge, transfer learning \citep{pan2009,weiss2016}, a new set of principles and methods, aims to leverage knowledge from a source domain to improve the learning performance in a target domain and to lower the reliance on the number of data in the target domain. Transfer learning has been successfully applied in various scenarios, such as texts \citep{wang2011}, images \citep{duan2012learning, kulis2011you, zhu2011heterogeneous}, music \citep{choi2017transfer}, climates \citep{ma2015transfer}, disease predictions \citep{ogoe2015knowledge}, biological systems \citep{zou2015transfer}, and linguistics \citep{prettenhofer2010cross}.

Transfer learning is also known as domain adaptation in the classification setting. Consider labelled data drawn from a source distribution $P$ and a relatively small quantity of labeled or unlabeled data from a target distribution $Q$. Several types of assumptions on how $P$ and $Q$ differ have been studied, such as covariate shift ($\CS$) and posterior drift ($\PD$). $\CS$ \citep{shimodaira2000improving, sugiyama2007direct, kpotufe2021marginal} occurs when the marginal distribution of the features (covariates) in the source data differs from that in the target data, but the conditional distributions of the label given the features (that is, the posterior class probabilities) remain the same between the source and the target data. $\PD$ \citep{cai2021transfer}, on the other hand, occurs when the conditional distributions of the label given the features differ between the source and the target, but the marginal distributions of the features are the same. Special cases of the $\PD$ setting include the real concept drift \citep{gao2007general, gama2014survey}, and the label corruption \citep{van2017theory}. \citet{scott2019generalized} combined the $\CS$ and $\PD$ assumptions and dubbed it \textit{covariate shift with posterior drift} (\CSPD), where the marginal distribution of features differs, just like in $\CS$, and so are the posterior probabilities, just like in $\PD$. See Section \ref{subsec:CSPD} for more details about the distributional settings of transfer learning.

When the distributions of the training and test data differ, traditional statistical and machine learning methods will suffer from misspecification and poor predictive performance. This calls for a set of algorithms and models that are trustworthy, reliable, and conscious about the possible change in data distributions \citep{ben2013robust, namkoong2016stochastic}. Another aspect of trustworthy models are uncertainty quantification. To this end, conformal prediction \citep{vovk2005algorithmic} provides a principled framework in which a prediction set is obtained for each data instance that, with a predetermined probability, covers the true response value  (this probability is also known as the coverage rate) \citep{shafer2008tutorial, vovk2009line, vovk2013transductive, burnaev2014efficiency, lei2014distribution, lei2018distribution}. Such a prediction set may be an interval for regression problems and a set of class labels for classification problems. Conformal prediction is particularly useful in some high-stake application domains in which a misclassified instance can lead to detrimental consequences (e.g., medical diagnosis and national security). By allowing a set-valued prediction for a ``difficult'' instance, one can defer the final decision to a human expert, to a secondary fine-tuned model that is based on more training data, or to sometime later when a more thorough investigation can be conducted.

Conformal prediction relies on the exchangeability assumption to ensure the coverage rate for a prediction set \citep{barber2023conformal}. The exchangeability assumption presumes the sequence of data points can be permuted without altering their joint distribution. However, when $P$ and $Q$ differ, future testing data follow a different distribution from that of the training data, violating the exchangeability assumption. \citet{tibshirani2019conformal} extended the conformal prediction methodology beyond the scenario of exchangeable data, using a weighted version of conformal prediction under the $\CS$ setting. See \citet{lei2021conformal}, \citet{fannjiang2022conformal}, \citet{barber2023conformal}, \citet{cauchois2024robust}, and \citet{wang2025conformal} for developments in conformal prediction methodologies tailored for several variants of the $\CS$ setting. Recent work has explored robustness in conformal prediction under distribution shifts. \cite{ai2024not} propose a fine-grained approach that reweights samples for covariate shift and adjusts confidence levels under a worst-case bound on $f$-divergence for posterior drift, while our framework generalizes $\CSPD$ by relaxing monotonicity conditions. \cite{liu2024multi} propose a multi-source conformal inference framework that reweights data from multiple biased sources, whereas our work focuses on a single-target setting under a more general shift model. Despite these advancements, to our best knowledge, conformal prediction has not been studied in a more general setting such as the $\CSPD$. 

\textbf{Contributions.} We propose a \textit{practical} algorithm of weighted conformal classification that returns a prediction set with a desired level of coverage under the $\CSPD$ assumption. Furthermore, we introduce a less stringent version of $\CSPD$, thereby expanding the applicability of conformal prediction to more general scenarios. One key difference between our work from that of \citet{tibshirani2019conformal} is that, in addition to data from the source domain, our approach also makes use of data in the target domain in the training process. To overcome the non-trivial computational challenge, we exploited Newton's identities and designed a \textit{practical} algorithm to compute the weights. We theoretically show that the proposed method achieves favorable asymptotic properties.

%The remainder of this article is outlined as follows. We begin by reviewing the background knowledge of conformal prediction and transfer learning in Section \ref{sec:review}. We introduce our main methodologies along with the discussion of some finite-sample properties in Section \ref{sec:method}. In Section \ref{sec:asymp}, we discuss some asymptotic properties of the proposed method. In Section \ref{sec:numerical}, we present supporting simulation and semi-synthetic experiments. Proofs and additional discussions can be found in the Appendix.

%%%%%%%%%%%%%%%%%%%%%%%%%%%%%%%%%%%%%%%%%%%%%%%%%%%%%%%%%%%%%%%%%%%%%%%%%%%%%%%%%%%%%%%%%%%%%%%%%%%%%%%%%
\section{BACKGROUND}\label{sec:review}
\subsection{Notations and Settings}
For $n$ objects $a_1,...,a_n,$ we write $a_{1:n} = \left\{ a_1, ..., a_n \right\}$ to denote their collection. Consider a multi-category classification problem. Let $Z_i = (X_i,Y_i)\in \mathbb{R}^d\times \left\{1,2,\dots,K\right\}, i=1,\dots, m+n$ denote the $i$th instance in the training data set and $Z_T = (X_T,Y_T)\in \mathbb{R}^d\times \left\{1,2,\dots,K\right\}$ denote a test data instance. The training data include $m$ i.i.d. instances from the source domain, $Z_{1:m}$, and $n$ i.i.d. instances from the target domain, $Z_{(m+1):(m+n)}$, while the test data instance $Z_T$ is from the target domain only. Let $P$ denote the distribution of a single data instance $Z=(X,Y)$ from the source data, referred to as the source distribution. For the source data, denote by $P_j$ the conditional distribution of $X$ given $Y=j$ where $j\in\left\{1,2,\dots,K\right\}$, by $\pi_{P,j}=P(Y=j)$ the prior (marginal) probability that the instance $(X,Y)$ belongs to class $j$, and by $\eta_{P,j}(x)=P(Y=j|X=x)$ the posterior (conditional) probability that $(X,Y)$ belongs to class $j$ given $X=x$. Finally denote by $P_X$ the marginal distribution of $X$ for the source data. Analogously we denote the distribution of a single data instance from $Z_{(m+1):(m+n)}\cup Z_T$ as the target distribution $Q$, the corresponding conditional feature distribution as $Q_j$, the prior class probability as $\pi_{Q,j}$, the posterior class probability as $\eta_{Q,j}(x)$ and the marginal feature distribution as $Q_X$ respectively.

\subsection{Weighted Conformal Prediction under Covariate Shift}\label{subsec:wconformal}

Conformal prediction provides a means for providing a prediction set that with a predetermined probability $1-\alpha$ covers the true label for a finite sample. Given a training data set $Z_{1:n}$, and a test data instance $X_T$, we obtain the conformal prediction set $\hat{C}(X_T)\subseteq \{1,\dots,K\}$, which satisfies
\begin{equation}\label{UncondCov}
 \mbbP (Y_T \in \hat{C} (X_T) ) \geq 1 - \alpha,
 \end{equation}
In what follows, we describe a variant, known as split conformal prediction \citep{papadopoulos2002inductive, vovk2005algorithmic}, where the entire training data is split into two parts, indexed by $\mathcal{S}_1$, $\mathcal{S}_2$. The first part is used to estimate the score function $S(\cdot,\cdot)$, whose arguments consist of a point $(x,y)$, and some dataset $D$. A high value of $S((x,y), D)$ indicates that the point $(x,y)$ ``conforms'' to $D$. Then we evaluate the score function on the second part to obtain the conformity scores $V_i^{(x,y)} = S(Z_i,Z_{\mathcal{S}_1})$, for all $i\in\mathcal{S}_2$. In binary classification where $y\in\left\{0,1\right\}$, \citet{lei2014classification} proposed split-conformal classification with a class-specific coverage guarantee:
\begin{equation}\label{ClasscondCov}
\mbbP (Y_T \in \hat{C}(X_T) | Y_T = j ) \geq 1 - \alpha, j = 0,1.
\end{equation}
Here, the score function is chosen to be an estimate of the posterior class probability $\hat\eta_j(x) = \hat P(Y=j|X=x)$ based on a classification algorithm trained on the first half of the data, known as the training set; $\hat\eta_j$ is then evaluated on the second half of the data, known as the calibration set, resulting in conformity scores $V_i^{(x,y)} = \mathds{1}[y_i=1] \hat\eta_1(x_i)+\mathds{1}[y_i=0] \hat\eta_0(x_i)$ for all $i$ in the second half. Finally, the set-valued prediction $\hat C(x)$ is defined as 
\[\hat C(x)=\{j\in\{0,1\}:\hat\eta_j(x) \geq \text{Quantile}(\alpha;V_{\mathcal{I}_j}\cup\{\infty\})\}\]
where $\mathcal{I}_j$ is the index set of those points in the second part that belong to class $j$. If the class-specific coverage \eqref{ClasscondCov} is valid for both classes, then the marginal coverage \eqref{UncondCov} is automatically obtained.

Both the original and the split conformal prediction assume that the distributions of the test data and the training data are the same. \citet{tibshirani2019conformal} generalized conformal prediction for regression to the $\CS$ setting. Assume that the probability measure of the target data covariates is absolutely continuous with respect to that of the source data covariates, we consider using the Radon-Nikodym derivative $w(x) = dQ_X(x)/dP_X(x)$ as a way to augment the target data using the source data; in particular, define $p_i = w(X_i)/[\sum_{i'=1}^n w(X_{i'})+w(X_T)]$, for $i=1,\dots,n$ and $p_{T} = w(X_T)/[\sum_{i'=1}^n w(X_{i'})+w(X_T)]$; here $n$ is the total sample size for full conformal prediction, or the calibration sample size for split-conformal prediction. We can now use weighted quantile of the scores computed in the source data as the cutoff value, with $p_i$ as the weight: 
\begin{align*}
    \hat{C}(x) = \Biggl\{y\in \R: &V_{n+1}^{(x,y)} \geq \\ &\text{Quantile}\Bigl(\alpha; \sum_{i=1}^{n}  p_i\delta_{V_i^{(x,y)}}+ p_{n+1}\delta_{\infty}\Bigr)\Biggr\},
\end{align*}
%$\text{Quantile} (\alpha; V_{1:n}^{(x,y)} \cup \left\{ \infty \right\})$, which is 
%$$\text{Quantile}\left(\alpha; \sum_{i=1}^{n}p_i\delta_{V_i^{(x,y)}} + p_{n+1}\delta_{\infty}\right),$$
where $\delta_c$ is a Dirac measure placing a point mass at $c$.\footnote{Here with a slight abuse of notation, we use a probability measure, instead of a set of numbers, as the second argument of the function $\text{Quantile}(\cdot;\cdot)$. In the standard, unweighted, sample quantile case, one could use the empirical measure associated with the set of numbers as the second argument.} \citet{tibshirani2019conformal} showed that $\hat{C}(x)$ satisfies the same coverage guarantee as in \eqref{UncondCov} assuming that the true value of $w(x)$ is known.

Our work differs from that of \citet{tibshirani2019conformal} in three aspects. First, we consider a more general distributional difference setting, namely the $\CSPD$, as opposed to the $\CS$ setting in their work. Second, we consider the cases in which there are multiple labeled target data points available for training, whereas in \citet{tibshirani2019conformal} there is only one target data point, unlabeled, as the test point data. Third, we consider the multi-category classification problem instead of the regression problem.
\subsection{Covariate Shift with Posterior Drift}\label{subsec:CSPD}
Classic and well-studied distributional assumptions between the source data and the target data include covariate shift ($\CS$) and posterior drift ($\PD$). Restrict our discussion to classification problems. $\CS$ assumes that for each class $j$, we have $\eta_{P,j}=\eta_{Q,j}$, but $P_X$ is allowed to be different from $Q_X$. $\PD$ assumes
$P_X=Q_X,$ but allows
$\eta_{P,j}(x)$ to be different from $\eta_{Q,j}(x).$ 

\citet{scott2019generalized} combined the $\CS$ and $\PD$ assumptions into the \textit{covariate shift with posterior drift} ($\CSPD$) assumption.  $\CSPD$ only assumes $\eta_{P,1}(x) = \phi(\eta_{Q,1}(x))$ for some strictly increasing function $\phi$, for all $x$. Compared to $\CS$, $\CSPD$ relaxes the requirement that $\eta_{P,1}=\eta_{Q,1}$; compared to $\PD$, $\CSPD$ dropped the requirement that $P_X=Q_X$. Both $\CS$ and $\PD$ are special cases of $\CSPD$. \citet{cai2021transfer} considered a special case of $\PD$ using a specific $\phi_j$ functions. Note that the original definition of $\CSPD$ in \citet{scott2019generalized} was restricted to binary classification; in this article, we study $\CSPD$ for the more general multicategory classification problem: in particular, we say that class $j$ satisfies the $\CSPD$ assumption if $\eta_{P,j}(x) = \phi_j(\eta_{Q,j}(x))$ for some strictly increasing function $\phi_j$.

At first appearance, $\CSPD$ may seem to be strong. However, it only asserts that for two points $x_1$ and $x_2$, if the probability that $x_1$ belongs to class $j$ is greater than that of $x_2$, assuming both are from the source distribution, then the same conclusion can be said if both are instead from the target distribution. In the binary classification setting, recall that the Bayes classifier is characterized by comparing $\eta_1(x)$ with $1/2$ \citep{lei2014classification, cai2021transfer}. In this setting, while $\CSPD$ allows the classification boundaries to differ between the source and the target distributions (since $\eta_{P,j}$ can be different from $\eta_{Q,j}$), it does ensure that if two points in the source are classified to be from different classes by the Bayes rule, then the Bayes rule would have the prediction if they are instead from the target distribution. 

The monotonicity assumption can also intuitively be understood as a type of invariance in the ranking of posterior probabilities between the source and target domains. For example, in medical diagnosis, when comparing the posterior probabilities of two patients getting a certain disease, the patient (e.g. someone who smokes) with a higher probability (than the other patient - perhaps someone who does not smoke) in the source population would remain to be the one with a higher probability if they both were in the target population. Intuitively, this means that a distribution shift does not fundamentally change the implications of covariates on the outcome. %This suggests that there is substantial value in learning the target data through learning the source data. 

\section{METHODOLOGY}\label{sec:method}
\subsection{Weighted Conformal Classification under Covariate Shift with Posterior Drift}\label{subsec:methodwcp}

When there is a relatively small quantity of labeled data from the target distribution $Q$, how to leverage abundant data from a different source distribution $P$, if $\eta_{P,j}(x)\neq \eta_{Q,j}(x)$, poses a major challenge. As a reminder, we have a source sample with size $m$, $Z_{1:m}$, a target sample with size $n$, $Z_{(m+1):(m+n)}$,  and a test data point $Z_T$ from the target distribution. Both $Z_{1:m}$ and $Z_{(m+1):(m+n)}$ may be used for the training purpose. Our goal here is to construct a prediction set $\hat{C}(x)$ that ensures coverage guarantees for the test target data: 
\begin{equation}\label{cov1}
\mathbb{Q}\left(Y_T\in \hat{C}(X_T) |Y_T = j\right)\geq 1 - \alpha
\end{equation}
where $\mathbb{Q}$ is the product measure of the measure $P^m$ governing the distribution of the source sample $Z_{1:m}$, and the measure $Q^{n+1}$ governing the distributions of both the target sample $Z_{(m+1):(m+n)}$ and the new test target data point $Z_T$.

Following the work of \citet{lei2014classification}, a natural choice of the prediction set is \[\hat{C}(x)=\left\{j:\hat{\eta}_{Q,j}(x)>\hat{t}_{j,\alpha}^* \right\},\] that is, based on comparing the estimated conditional class probabilities $\hat{\eta}_{Q,j}(x)$ with a threshold $\hat{t}_{j,\alpha}^*$. One challenge is that in the absence of sufficiently many labeled data from the target distribution $Q$, the estimation of $\eta_{Q,j}(x)$ may be difficult. 

If the $\CSPD$ assumption holds for class $j$, we have $\eta_{P,j}(x)=\phi_j(\eta_{Q,j}(x))$, where $\phi_j$ is an monotone increasing function. Hence, thresholding $\eta_{Q,j}(x)$ is equivalent to thresholding $\eta_{P,j}(x)$ as long as the thresholds are chosen properly, that is,
\begin{equation}\label{cov2}
\left\{ x: \eta_{Q,j}(x) \geq t^*_{j,\alpha} \right\} = \left\{ x: \eta_{P,j}(x) \geq t_{j,\alpha} \right\},
\end{equation}
where $t^*_{j,\alpha}$ and $t_{j,\alpha}$ satisfy that $\phi_j(t^*_{j,\alpha})=t_{j,\alpha}.$
Therefore, one may instead construct the prediction set as 
\begin{equation}\label{cov3}
\hat{C}(x)=\left\{j:\hat{\eta}_{P,j}(x)>\hat{t}_{j,\alpha} \right\}.
\end{equation}
Note that there are many labeled source data that afford an accurate estimation $\hat{\eta}_{P,j}$. 

After $\hat{\eta}_{P,j}(x)$ is chosen as the score, the next question is how to select the threshold $\hat{t}_{j,\alpha}$. Given the prediction set \eqref{cov3}, the coverage guarantee for the test data point \eqref{cov1} becomes \[\mathbb{Q}\left(\hat{\eta}_{P,j}(X_T)\geq \hat{t}_{j,\alpha}|Y_T = j \right) \geq 1 - \alpha, \forall j.\]
Following the conformal prediction literature, a standard choice of $\hat{t}_{j,\alpha}$ is the $\alpha$ quantile of $\hat{\eta}_{P,j}(X)$ over all the target data points from class $j$. Note that this would mean that the class label should be available, hence we will need labeled target data. The practical difficulty is that labeled target data are either not available at all, as in the setting of \citet{tibshirani2019conformal}, or insufficient, as in our setting.

Fortunately, there are abundant labeled data from the source distribution. If we know the ratio of covariate likelihoods $dQ_{X|Y=j}/dP_{X|Y=j}$ for each class $j$, we can modify the conformal prediction procedure \citep{tibshirani2019conformal}. The idea is to calculate the quantile based on a weighted empirical distribution that includes both the target data and the source data, leading to Weighted Conformal Classification under $\CSPD$. 
%Two substantial differences of our work from the work of \citet{tibshirani2019conformal} are that we allow $\eta_P \neq \eta_Q$ but \citet{tibshirani2019conformal} assumed $\eta_P = \eta_Q$, and we use both the target data and the source data to identify the threshold instead of the source data alone. The latter difference has computational implications, which will be discussed later.

Algorithm \hyperref[alg1]{1} below sketches our main methodology. Denote $\mathcal{I}^j$ as the index set for class $j$. We first divide the source training data into two splits, indexed by $\mathcal{S}_1$ and $\mathcal{S}_2$. The first split is used to estimate the conditional class probability $\hat{\eta}_{P,j}$ for all $j$. Both the second split of the source sample and the target sample, indexed by $\mathcal{T}$, form the calibration set. For each class $j$, let $\mathcal{R}^j = (\mathcal{S}_2\cap \mathcal{I}^j)\cup (\mathcal{T}\cap \mathcal{I}^j)$ denote the index for the calibration set. We then evaluate $\hat{\eta}_{P,j}$ on all class $j$ points from $\mathcal{R}^j\cup \{T\}$, then compute the threshold $\hat{t}_{j,\alpha}$ as the $\alpha$ quantile of the weighted conformity scores. 

\begin{algorithm}\label{alg1}
\caption{Weighted Conformal Classification under $\CSPD$ (WCC-CSPD)}
\label{alg:weighted_conformal_classification}
\begin{algorithmic}
\STATE \textbf{Input:} $Z_{1:m}$ from $P$, $Z_{(m+1):(m+n)}$ from $Q$, coverage rate $1-\alpha$, a classifier $\mathcal{A}$ for estimating $\hat{\eta}_{P,j}$, and a test data point $X_T = X_{m+n+1}$ from $Q$.
\STATE \textbf{Output:} A set-valued classifier $\hat{C}(x)$ that predicts the class label.
\end{algorithmic}
\begin{algorithmic}[1]
\STATE Randomly split $\left\{1,\ldots, m\right\}$ into two equal sized subsets, indexed by $\mathcal{S}_1,\mathcal{S}_2$
\STATE Estimate $\hat{\eta}_{P,j}$ using data set $\left\{(X_i,Y_i):i\in\mathcal{S}_1\right\}$ and classifier $\mathcal{A}$
\FOR{$j\in \left\{1,\dots,K\right\}$}
%\STATE Split the training data by class, for the second split of the source sample, denoted by $Z_{\mathcal{S}^j}= \left\{(X_i,Y_i):Y_i = j, i\in\mathcal{S}^j_2\subseteq \left\{1,...,m\right\}\right\}$, and for the target sample, denoted by $Z_{\mathcal{T}^j}= \left\{(X_i,Y_i):Y_i = j, i\in\mathcal{T}^j\subseteq \left\{m+1,...,m+n\right\}\right\}$
    \STATE Evaluate $\hat{\eta}_{P,j}(x)$ on $\mathcal{R}^j \cup \{T\}$ where $T=m+n+1.$
    %\STATE Define $w_{ij}(x) = 1$ for $i\in\mathcal{S}_2\cap\mathcal{I}^j$; and $w_{ij}(x) = w^j(x)$ for $i\in\mathcal{T}\cap\mathcal{I}^j$
    \STATE Compute the weights $\tilde w_{ij}$ and $\tilde w_{Tj}$ according to \eqref{weight}.
    \STATE Compute the threshold $\hat{t}_{j,\alpha}$ according to (\ref{threshold}).
\ENDFOR
\RETURN $\hat{C}(x)=\left\{j:\hat{\eta}_{P,j}(x)>\hat{t}_{j,\alpha} \right\}$
\end{algorithmic}
\end{algorithm}

Assuming $Q_{X|Y=j}$ is absolutely continuous w.r.t $P_{X|Y=j}$, denote the Radon-Nikodym derivative $w^j(x) = dQ_{X|Y=j}(x)/dP_{X|Y=j}(x)$. To compute the weight for each data point in $\mathcal{S}_2\cup \mathcal{T}$, we first define a series of \textit{initial weight functions}, one for each data point: $w_{ij}(x) = 1$ for data point $i\in\mathcal{S}_2\cap\mathcal{I}^j$; and $w_{ij}(x) = w^j(x)$ for data point $i\in\mathcal{T}\cap\mathcal{I}^j$. We then assign a weight of $\tilde{w}_{ij}$ to data point $i$ in class $j$, defined as,
\begin{equation}\label{weight}
\tilde{w}_{ij} = \frac{\sum_{\sigma :\sigma(T) = i}\prod_{k}w_{kj}(x_{\sigma(k)})}{\sum_{\sigma}\prod_{k}w_{kj}(x_{\sigma(k)})},
\end{equation}
where $T = m+n+1$ is the index for the test data point, and $\sigma$ is a permutation of the indices  $\mathcal{R}^j\cup \{T\}$ for those class $j$ points among both the calibration set and the test data point. Here, the products are taken over all points $k\in \mathcal{R}^j \cup \{T\}$ and the summations are taken over all permutations. The weights are defined as in \eqref{weight} to ensure coverage guarantees, as shown in Lemma \ref{lemma1} below.

\begin{lemma}\label{lemma1}
Obtain the set-valued classifier based on Algorithm \hyperref[alg1]{1}, that is, estimate $\hat{\eta}_{P,j}$ using the first split of the source training data $\mathcal{S}_1$ and classifier $\mathcal{A}$, evaluate $\hat{\eta}_{P,j}(x)$ on $\mathcal{R}^j\cup \{T\}$ (the second split of the class $j$ source training data, the target training data, and the test data point), compute the weights $\tilde w_{ij}$ and $\tilde w_{Tj}$ according to \eqref{weight}, and \begin{equation}\label{threshold}
    \hat{t}_{j,\alpha}=\text{Quantile} \left(\alpha, \sum_{{i\in \mathcal{R}^j}}\tilde{w}_{ij} \delta_{\hat{\eta}_{P,j}(x_i)}+\tilde{w}_{Tj}\delta_{\infty}\right).
\end{equation}
Then we have,
\[
\mathbb{Q}\left( \hat{\eta}_{P,j}(X_{T}) \geq \hat{t}_{j,\alpha} | Y_{T} = j \right) \geq 1-\alpha,
\]
where $\mathbb{Q}$ is with respect to all the data points in $\mathcal{S}_2\cup \mathcal{T}\cup \{T\}$.
\end{lemma}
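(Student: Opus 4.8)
The statement is a finite-sample validity claim for weighted split conformal prediction, so the $\CSPD$ structure and the functions $\phi_j$ are not needed here (they will only enter later, when this is converted into coverage for the true label $Y_T$); the only distributional inputs used are $Q_{X|Y=j}\ll P_{X|Y=j}$ and the correctness of the density ratio $w^j$. The plan is: (i) condition away the randomness in $\hat\eta_{P,j}$ and in the class-$j$ index set so that only class-$j$ calibration/test covariates remain; (ii) show these covariates are weighted exchangeable with the weight functions underlying \eqref{weight}; (iii) invoke the weighted conformal coverage result of \citet{tibshirani2019conformal}; (iv) integrate the conditioning back out.

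First I would condition on the split $(\mathcal{S}_1,\mathcal{S}_2)$, on all data indexed by $\mathcal{S}_1$, on the labels $\{Y_i:i\in\mathcal{S}_2\cup\mathcal{T}\}$, and on the event $\{Y_T=j\}$ (assume $\pi_{Q,j}>0$, else the claim is vacuous); call this conditioning $\mathcal{C}$. Under $\mathcal{C}$ the index set $\mathcal{R}^j$ is deterministic and, since $\hat\eta_{P,j}$ is built only from $\mathcal{S}_1$, it is a fixed measurable function. Because the source sample, the target sample, and $Z_T$ are mutually independent with i.i.d.\ coordinates within each block, the covariates $\{X_k:k\in\mathcal{R}^j\cup\{T\}\}$ are then mutually independent, with $X_k\sim P_{X|Y=j}$ for $k\in\mathcal{S}_2\cap\mathcal{I}^j$ and $X_k\sim Q_{X|Y=j}$ for $k\in(\mathcal{T}\cap\mathcal{I}^j)\cup\{T\}$ (taking the test point's initial weight function to be $w_{Tj}=w^j$, consistent with its being drawn from $Q$). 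Writing $N_j=|\mathcal{R}^j|+1$ and using $\bigl(P_{X|Y=j}\bigr)^{\otimes N_j}$ as dominating measure, the conditional joint density of $(X_k)_{k\in\mathcal{R}^j\cup\{T\}}$ factors as $\prod_{k\in(\mathcal{T}\cap\mathcal{I}^j)\cup\{T\}} w^j(x_k)=\prod_{k\in\mathcal{R}^j\cup\{T\}} w_{kj}(x_k)$, i.e.\ a product of per-coordinate weight functions times the trivially symmetric constant $1$; hence the family is weighted exchangeable in the sense of \citet{tibshirani2019conformal} with weight functions $(w_{kj})$, and the associated normalized weights are exactly the $\tilde w_{ij},\tilde w_{Tj}$ of \eqref{weight} (both sum $\prod_k w_{kj}(x_{\sigma(k)})$ over the permutations $\sigma$ sending the test slot to $i$, resp.\ to $T$, and normalize by the sum over all $\sigma$).

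Since the scores $V_k:=\hat\eta_{P,j}(X_k)$ are fixed functions of $X_k$ alone, applying the weighted conformal guarantee of \citet{tibshirani2019conformal} (in the lower-$\alpha$-quantile form, in which the test point contributes $\tilde w_{Tj}\delta_\infty$) gives, conditionally on $\mathcal{C}$,
\[
\mathbb{P}\!\left(\hat\eta_{P,j}(X_T)\ \ge\ \text{Quantile}\Big(\alpha;\ \sum_{i\in\mathcal{R}^j}\tilde w_{ij}\,\delta_{\hat\eta_{P,j}(x_i)}+\tilde w_{Tj}\,\delta_\infty\Big)\ \Big|\ \mathcal{C}\right)\ \ge\ 1-\alpha,
\]
and the right-hand argument of the inner inequality is precisely $\hat t_{j,\alpha}$. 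Because this bound is $1-\alpha$ uniformly over all values of the conditioning variables in $\mathcal{C}$, taking expectations over those variables (retaining only $\{Y_T=j\}$) yields the asserted inequality.

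The step I expect to be the main obstacle is the middle one: arranging the conditioning so that $\mathcal{R}^j$ and $\hat\eta_{P,j}$ become deterministic while the class-$j$ covariates remain genuinely weighted exchangeable, and then checking carefully that definition \eqref{weight} coincides with the normalized permutation weights to which the cited lemma applies — in particular pinning down $w_{Tj}=w^j$ and verifying that the $\delta_\infty$ device delivers the ``$V_T\ge$ lower quantile'' direction rather than the ``$V_T\le$ upper quantile'' direction. Everything else is a direct transcription of the standard weighted-exchangeable conformal argument.
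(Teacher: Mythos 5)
Your overall route is the same as the paper's: establish weighted exchangeability of the class-$j$ covariates and invoke the weighted-quantile lemma of \citet{tibshirani2019conformal} (Lemma \ref{lemmaTib}). Your write-up is in fact more careful than the paper's proof, which simply asserts weighted exchangeability: you make explicit the conditioning on $\mathcal{S}_1$, on the calibration labels and on $\{Y_T=j\}$ that renders $\hat\eta_{P,j}$ and $\mathcal{R}^j$ deterministic, you verify the factorization of the joint density as $\prod_k w_{kj}(x_k)$ with respect to $P_{X|Y=j}^{\otimes N_j}$, and you check that \eqref{weight} coincides with the normalized permutation weights. That part is sound.

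The one step you flag but do not resolve is exactly the step that does not go through as you state it: there is no ``lower-$\alpha$-quantile form'' of Lemma \ref{lemmaTib} in which the test point contributes $\tilde w_{Tj}\delta_{+\infty}$. What weighted exchangeability delivers (apply Lemma \ref{lemmaTib} to the negated scores $-\hat\eta_{P,j}$, as the paper does) is a guarantee with threshold $\text{Quantile}\bigl(\alpha;\sum_{i\in\mathcal{R}^j}\tilde w_{ij}\delta_{\hat\eta_{P,j}(x_i)}+\tilde w_{Tj}\delta_{-\infty}\bigr)$: negation maps the conservative point mass at $+\infty$ to one at $-\infty$. Moving the mass $\tilde w_{Tj}$ from $-\infty$ to $+\infty$ weakly increases the $\alpha$-quantile, so the threshold \eqref{threshold} is weakly larger than the one actually certified, the event $\{\hat\eta_{P,j}(X_T)\ge\hat t_{j,\alpha}\}$ is smaller, and exact $1-\alpha$ coverage can fail by up to $\tilde w_{Tj}$. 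Concretely, in the unweighted exchangeable case with $n_j$ calibration scores and a continuous score distribution, \eqref{threshold} is the $\lceil\alpha(n_j+1)\rceil$-th smallest calibration score and the coverage equals $1-\lceil\alpha(n_j+1)\rceil/(n_j+1)$, which is strictly below $1-\alpha$ whenever $\alpha(n_j+1)$ is not an integer (e.g.\ $0.80<0.85$ for $n_j=9$, $\alpha=0.15$). To be fair, the paper's own proof makes the same leap --- its ``immediately equivalent'' negation step silently keeps the point mass at $+\infty$ --- so you have reproduced the paper's argument including its weak point rather than introduced a new one; a clean fix is to state the threshold with $\delta_{-\infty}$ (equivalently, drop the infinite mass, as the implementation in Appendix B in effect does) or to weaken the conclusion to $1-\alpha-\tilde w_{Tj}$.
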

All proofs are provided in Appendix A. Lemma \ref{lemma1} establishes the groundwork for incorporating both source and target samples in the calculation of weighted quantiles. Theorem \ref{thm1} below shows a lower bound of the coverage probability of the prediction set induced by the threshold in Lemma \ref{lemma1}.

\begin{theorem}\label{thm1}
Assume for every class $j$, $Q_{X|Y=j}$ is absolutely continuous w.r.t. $P_{X|Y=j}$, and the $\CSPD$ assumption holds. For a test point $(X_T, Y_T)$, let $\hat{C}(X_T)$ be the set-valued classifier obtained from Algorithm \hyperref[alg1]{1}. Then $\forall\alpha\in (0,1),$ we have
\[\mathbb{Q}\left(Y_T \in \hat{C}(X_T) \right) \geq 1-\alpha,\]
where $\mathbb{Q}$ is with respect to all the data points in $\mathcal{S}_2\cup \mathcal{T}\cup \{T\}$.
\end{theorem}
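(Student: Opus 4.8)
The plan is to obtain the marginal coverage in Theorem \ref{thm1} as an immediate consequence of the class-conditional coverage already proved in Lemma \ref{lemma1}, via the law of total probability in the test label $Y_T$. Since Algorithm \hyperref[alg1]{1} outputs $\hat{C}(x) = \{\, j : \hat{\eta}_{P,j}(x) > \hat{t}_{j,\alpha} \,\}$, on the event $\{Y_T = j\}$ the event $\{Y_T \in \hat{C}(X_T)\}$ is exactly $\{\, j \in \hat{C}(X_T) \,\} = \{\, \hat{\eta}_{P,j}(X_T) > \hat{t}_{j,\alpha} \,\}$; hence the overall coverage is a $\pi_{Q,j}$-weighted average of precisely the quantities that Lemma \ref{lemma1} bounds below by $1-\alpha$.

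Concretely I would proceed in four steps. First, under $\mathbb{Q}$ the test instance $Z_T = (X_T, Y_T)$ has marginal law $Q$, so $\mathbb{Q}(Y_T = j) = \pi_{Q,j}$ and $\sum_{j=1}^K \pi_{Q,j} = 1$. Second, conditioning on $Y_T$,
\[
\mathbb{Q}\bigl( Y_T \in \hat{C}(X_T) \bigr) \;=\; \sum_{j=1}^{K} \pi_{Q,j}\, \mathbb{Q}\bigl( \hat{\eta}_{P,j}(X_T) > \hat{t}_{j,\alpha} \mid Y_T = j \bigr).
\]
Third, for each $j$ the absolute-continuity hypothesis makes $w^j(x) = dQ_{X|Y=j}(x)/dP_{X|Y=j}(x)$ well defined, so Lemma \ref{lemma1} applies and gives $\mathbb{Q}\bigl( \hat{\eta}_{P,j}(X_T) \geq \hat{t}_{j,\alpha} \mid Y_T = j \bigr) \geq 1-\alpha$. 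Fourth, substituting this bound into every summand and using $\sum_{j} \pi_{Q,j} = 1$ yields $\mathbb{Q}(Y_T \in \hat{C}(X_T)) \geq (1-\alpha) \sum_{j} \pi_{Q,j} = 1-\alpha$, as claimed.

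The one place that needs care --- and the only real obstacle --- is the mismatch between the strict ``$>$'' in the definition of $\hat{C}$ and the non-strict ``$\geq$'' in Lemma \ref{lemma1}: a priori the Lemma only lower-bounds $\mathbb{Q}(\hat{\eta}_{P,j}(X_T) \geq \hat{t}_{j,\alpha} \mid Y_T = j)$, which may strictly exceed $\mathbb{Q}(\hat{\eta}_{P,j}(X_T) > \hat{t}_{j,\alpha} \mid Y_T = j)$, the difference being the conditional mass of $\hat{\eta}_{P,j}(X_T)$ landing exactly on the random threshold $\hat{t}_{j,\alpha}$. I would remove this gap in the standard way --- by assuming the conditional law of the score $\hat{\eta}_{P,j}(X_T)$ is atomless (so $\hat{\eta}_{P,j}(X_T) \neq \hat{t}_{j,\alpha}$ almost surely), or by reading $\hat{C}$ with ``$\geq$'', or by a randomized tie-break --- after which the display in step two holds with the events aligned and step four is exact. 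Everything beyond this is bookkeeping: the $\CSPD$ assumption together with \eqref{cov2} is what justifies thresholding $\hat{\eta}_{P,j}$ as the operationally correct surrogate for thresholding $\hat{\eta}_{Q,j}$, but the coverage lower bound itself rides entirely on the weighted-exchangeability content of Lemma \ref{lemma1}, so no step after invoking that Lemma should be delicate.
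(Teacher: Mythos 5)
Your proposal is correct and takes essentially the same route as the paper: the paper's own proof also reduces the marginal statement to the class-conditional bound of Lemma \ref{lemma1}, observing that on $\{Y_T=j\}$ the event $\{Y_T\in\hat{C}(X_T)\}$ is the thresholding event for class $j$, and then (implicitly) averages over $Y_T$ exactly as in your law-of-total-probability decomposition. Your explicit treatment of the strict ``$>$'' in $\hat{C}$ versus the ``$\geq$'' in Lemma \ref{lemma1} is a point the paper silently glosses over (its proof simply writes the event with $\geq$), so your remark is a welcome refinement rather than a departure.
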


\subsection{The Computational Issue of the Weights}
Under the pure $\CS$ setting, as explored by \citet{tibshirani2019conformal}, the scenario is simplified to no labeled target data for training ($n=0$) and $w_{ij}(x)=1$ for all $i$ being a labeled source data, and $w_{Tj}(x)=w^j(x)$. Hence, the weight in \eqref{weight} is simplified to 
\[\tilde{w}_{ij} = \frac{\sum_{\sigma :\sigma(T) = i}w^j(x_i)}{\sum_{\sigma}w^j(x_{\sigma(T)})} = \frac{w^j(x_i)}{\sum_{i\in\mathcal{R}^j\cup \{T\}}w^j(x_i)}\]
While this simplification under the $\CS$ setting streamlines the weight calculation, it overlooks the potentially rich information in the target data. Our work has incorporated the target training data to capitalize on the full potential of label information therein.

Since the denominator in \eqref{weight} is the same for all data points, we now focus on the numerator of (\ref{weight}), $\sum_{\sigma :\sigma(T) = i}\prod_{k}w_{kj}(x_{\sigma(k)})$. Here one takes the sum over many permutations of $N_j = |\mathcal{R}^j\cup \{T\}|$ many elements. The first $N_j^S = |(\mathcal{S}_2\cap \mathcal{I}^j)|$ entries of the permutation are evaluated using the initial weight function $w_{ij}(x) = 1$, and the remaining $N_j^T+1 = |(\mathcal{T}\cap \mathcal{I}^j)\cup \{T\}|$ entries are evaluated with the $w_{ij}(x) = w^j(x)$ function. The product of these terms forms one term in the numerator of \eqref{weight}. Then $\tilde{w}_{ij}$ is the summation of these product terms over all possible permutations $\sigma$ such that $\sigma(T) = i$, scaled by the common denominator. For each $i$, while there are $(N_j -1)!$ many such permutations which satisfy $\sigma(T) = i$, many of them lead to the same product, and hence, the unique number of product terms that one needs to compute is reduced by a factor of $N_j^S! \cdot N_j^T!$ times. In other words, for each $i$ in class $j$, we ``only'' need to consider the ``$(N_j -1)$ choose $N_j^T$'' many combinations (instead of permutations) out of $\left\{w^j(x_i): i\in \mathcal{R}^j\cup \{T\} \right\}$, compute the product in each combination, and sum the products over all combinations, followed by a normalization term.

The number of combinations required for each $i$ remains excessively large. In this article, we leverage Newton's identities \citep{littlewood1970university} to make the computation manageable.

%[Newton's identities]
\begin{lemma}\label{lemma2}
Let $a_1,\dots,a_n$ be $n$ numbers. For $n > k\geq 1$, denote by $p_k(a_1,\dots,a_n)$ the $k$-th power sum:
$p_k(a_1,\dots,a_n) = \sum_{i=1}^n a_i^k.$
For $k\geq 0 $, denote by $e_k(a_1,\dots,a_n)$ the elementary symmetric polynomial (the sum of all distinct products of $k$ distinct variables),
\begin{align*}
e_0 (a_1,\dots,a_n) &= 1\\
e_1 (a_1,\dots,a_n) &= a_1 + a_2 + \cdots + a_n\\
e_2 (a_1,\dots,a_n) &= \sum_{1\leq i < j \leq n} a_i a_j,\\
&\ \ \vdots \\
e_n (a_1,\dots,a_n) &= a_1 a_2 \cdots a_n\\
\end{align*}
%\[e_0 (a_1,\dots,a_n) = 1,\] \[e_1 (a_1,\dots,a_n) = \sum_{1\leq i\leq n} a_i,\]  
%\[e_2 (a_1,\dots,a_n) = \sum_{1\leq i < j \leq n} a_i a_j,\] \[ \cdots, \] 
%\[e_n (a_1,\dots,a_n) = a_1 a_2 \cdots a_n .\]

Then we have Newton's identities: for all $1\leq k \leq n$,
\begin{equation}\label{Newton}
\begin{aligned}
   e_k&(a_1,\dots, a_n) \\
   &= \frac{1}{k} \sum_{i=1}^k (-1)^{i-1} e_{k-i}(a_1,\dots,a_n) p_i(a_1,\dots,a_n).
\end{aligned}
\end{equation}
\end{lemma}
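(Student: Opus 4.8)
The plan is to prove Newton's identities by the generating‑function (logarithmic‑derivative) method, working in the ring $\mathbb{R}[[t]]$ of formal power series in $t$ (equivalently, treating $t$ as a real parameter with $|t|<1/\max_i|a_i|$, so that every series below converges absolutely and term‑by‑term differentiation is legitimate). First I would introduce the polynomial
\[
P(t) := \prod_{i=1}^n (1 - a_i t) = \sum_{k=0}^n (-1)^k e_k(a_1,\dots,a_n)\, t^k ,
\]
the second equality being the Vieta‑type expansion that is essentially the definition of the $e_k$. Differentiating $\log P(t) = \sum_{i=1}^n \log(1 - a_i t)$ and expanding each summand as a geometric series gives
\[
\frac{P'(t)}{P(t)} = \sum_{i=1}^n \frac{-a_i}{1 - a_i t} = -\sum_{i=1}^n \sum_{m\ge 1} a_i^{m}\, t^{m-1} = -\sum_{m\ge 1} p_m(a_1,\dots,a_n)\, t^{m-1},
\]
where the last step interchanges the finite sum over $i$ with the sum over $m$. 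Clearing the denominator yields the key relation $-P'(t) = P(t)\,\sum_{m\ge 1} p_m\, t^{m-1}$.

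Next I would expand both sides as power series and match the coefficient of $t^{k-1}$ for each $1\le k\le n$. On the left, $-P'(t) = \sum_{j\ge 1} (-1)^{j-1} j\, e_j\, t^{j-1}$, so the coefficient of $t^{k-1}$ is $(-1)^{k-1} k\, e_k$. On the right, the Cauchy product $\bigl(\sum_{j\ge 0} (-1)^j e_j t^j\bigr)\bigl(\sum_{m\ge 1} p_m t^{m-1}\bigr)$ has coefficient $\sum_{j=0}^{k-1} (-1)^j e_j\, p_{k-j}$ at $t^{k-1}$. Equating the two, substituting $i = k-j$, multiplying through by $(-1)^{k-1}$, and using $(-1)^{k-i}(-1)^{k-1} = (-1)^{i-1}$ turns $(-1)^{k-1} k\, e_k = \sum_{i=1}^{k} (-1)^{k-i} e_{k-i}\, p_i$ into exactly \eqref{Newton}.

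The only point needing care is the sign‑and‑index bookkeeping in this last simplification, together with the (routine) justification that the logarithmic‑derivative manipulation is valid; the latter is immediate once the whole computation is read as an identity in $\mathbb{R}[[t]]$, where $P(t)$ is a unit, or alternatively as an analytic identity for small $t$. I do not anticipate any substantive obstacle. As an alternative route — one that dovetails with the permutation/combination bookkeeping used elsewhere in the paper — one could instead expand the product $e_{k-i}\,p_i$ into monomials and exhibit a sign‑reversing involution on the non‑canceling terms, but the generating‑function argument above is the shortest.
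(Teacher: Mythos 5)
Your generating-function argument is correct: the coefficient extraction from $-P'(t)=P(t)\sum_{m\ge 1}p_m t^{m-1}$ with $P(t)=\prod_i(1-a_it)=\sum_{k}(-1)^k e_k t^k$ is the standard proof of Newton's identities, and your sign and index bookkeeping checks out (the identity holds in $\mathbb{R}[[t]]$, so no convergence caveat is even needed). Note that the paper itself offers no proof of Lemma \ref{lemma2} — it is stated as the classical result with a citation to Littlewood — so there is nothing to compare against; your derivation simply supplies a complete, self-contained justification of the cited identity \eqref{Newton}.
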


Following Lemma \ref{lemma2}, one can show that the numerator of \eqref{weight}, $\sum_{\sigma :\sigma(T) = i}\prod_{k}w_{kj}(x_{\sigma(k)})$, is the same as  $w^j(x_{i})\cdot e_{N_j^T}(\left\{w^j(x_c): c \neq i, c\in \mathcal{R}^j\cup \{T\} \right\})$, that is, $w^j(x_c)$ times the $N_j^T$-th elementary symmetric polynomial over the set of ${w^j(x_c)}$ for $c \neq i$ and $c\in \mathcal{R}^j\cup {T}$. Through this pivotal simplification, we convert the computationally intensive task of calculating \eqref{weight} into a more tractable problem, by using the recursive formula \eqref{Newton}. It is particularly beneficial when the sample size is large. 

\subsection{Generalized Covariate Shift with Posterior Drift}\label{subsec:gcspd}
In this section, we expand upon the $\CSPD$ assumption to a more general and flexible assumption named Generalized $\CSPD$ ($\gCSPD$).

\begin{definition}[\textbf{$\gCSPD$ at $\alpha$}]
 We say class $j$ satisfies $\gCSPD$ at $\alpha$ if, for some $\alpha\in (0,1)$ and some function $\phi_j$, the following conditions hold: For any $t_1, t_2$ such that $t_1 < t_{j,\alpha} < t_2$, where $t_{j,\alpha}$ satisfies $\mathbb{Q}(\{ x: \eta_{P,j}(x) \geq t_{j,\alpha} \}) = \alpha$, we have $\phi_j(t_1) < \phi_j(t_{j,\alpha})< \phi_j(t_2)$ and $\eta_{P,j}(x) = \phi_j(\eta_{Q,j}(x))$.
\end{definition}
        
\begin{figure}[H]
    \centering
    \includegraphics[width=2.3in]{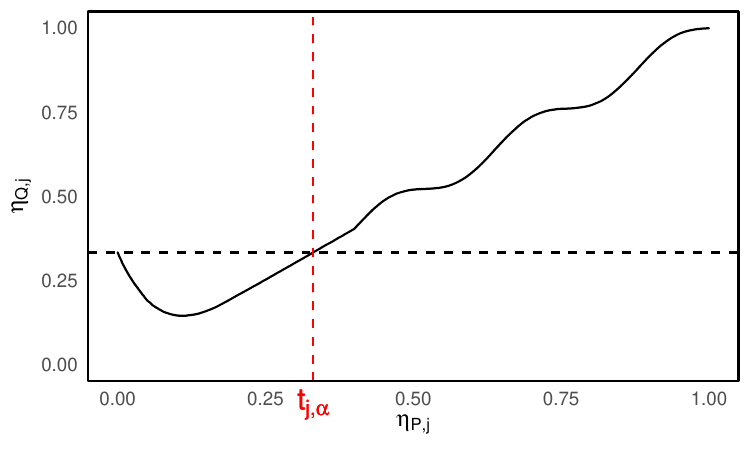} % Adjust the width as needed
    \caption{Illustration of $\gCSPD$ at $\alpha$}
    \label{g-illu}
\end{figure}

The $\gCSPD$ assumption relaxes the monotone assumption of $\phi_j$ in the original $\CSPD$ framework, allowing for additional modes of shifts between the source and the target distributions. Figure \ref{g-illu} is an illustration of a $\gCSPD$ at level $\alpha$. The solid black curve shows the shift between $\eta_{P,j}$ and $\eta_{Q,j}$ as defined by function $\phi_j$. The red dashed vertical line indicates the threshold $t_{j,\alpha}$, where the probability that $\eta_{P,j}(X)$ exceeds this threshold equals $\alpha$.

The intuition behind the relaxation of the monotonicity of $\phi_j$ in $\gCSPD$ is the following: the key of the $\CSPD$ assumption is that it allowed us to replace the thresholding inequality $\eta_{Q,j}(x)\geq t^*_{j,\alpha}$ by a new inequality $\eta_{P,j}(x)\geq t_{j,\alpha}$ with $\phi_j(t^*_{j,\alpha})=t_{j,\alpha}$; see (\ref{cov2}). To this end, we first need to have $\eta_{P,j}(x) = \phi_j(\eta_{Q,j}(x))$ at the threshold values $\eta_{Q,j}(x) = t^*_{j,\alpha}$ and $\eta_{P,j}(x) = t_{j,\alpha}$; in addition, in order for the coverage probability to remain unchanged as we switch the thresholding inequality, we need $\phi_j(t_1) < \phi_j(t_{j,\alpha})<\phi_j(t_2)$, so that no data instances that satisfied $\eta_{Q,j}(x) \geq t_{j,\alpha}^*$ would turn out to be $\eta_{P,j}(x) < t_{j,\alpha}$ after the switch, and vice versa. Theorem \ref{thm1} still remains true with the replacement of the $\CSPD$ assumption by $\gCSPD$ at $\alpha$. Note that if $\gCSPD$ is satisfied at all $\alpha\in (0,1)$, then $\CSPD$ is satisfied.

%Proposition \ref{prop1} is adapted from Theorem \ref{thm1}, by replacing the $\CSPD$ assumption with $\gCSPD$ at $\alpha$. It ensures the coverage probability for a new test point is at least $1-\alpha$.

%\begin{proposition}\label{prop1}
%Assume for every class $j$, $Q_{X|Y=j}$ is absolutely continuous w.r.t $P_{X|Y=j}$, and we have $\gCSPD$ at $\alpha \in (0,1)$. For a test point $(X_T, Y_T)$, let $\hat{C}(X_T)$ be the set-valued classifier obtained from algorithm \hyperref[alg1]{1}.  Then we have
%$$\mathbb{Q}\left(Y_T \in \hat{C}(X_T) \right) \geq 1-\alpha,$$
%where $\mathbb{Q}$ is with respect to all the data points in $\mathcal{S}_2\cup \mathcal{T}\cup \{T\}$.
%\end{proposition}

\begin{figure*}[!t]
    \centering
    \includegraphics[width = \textwidth]{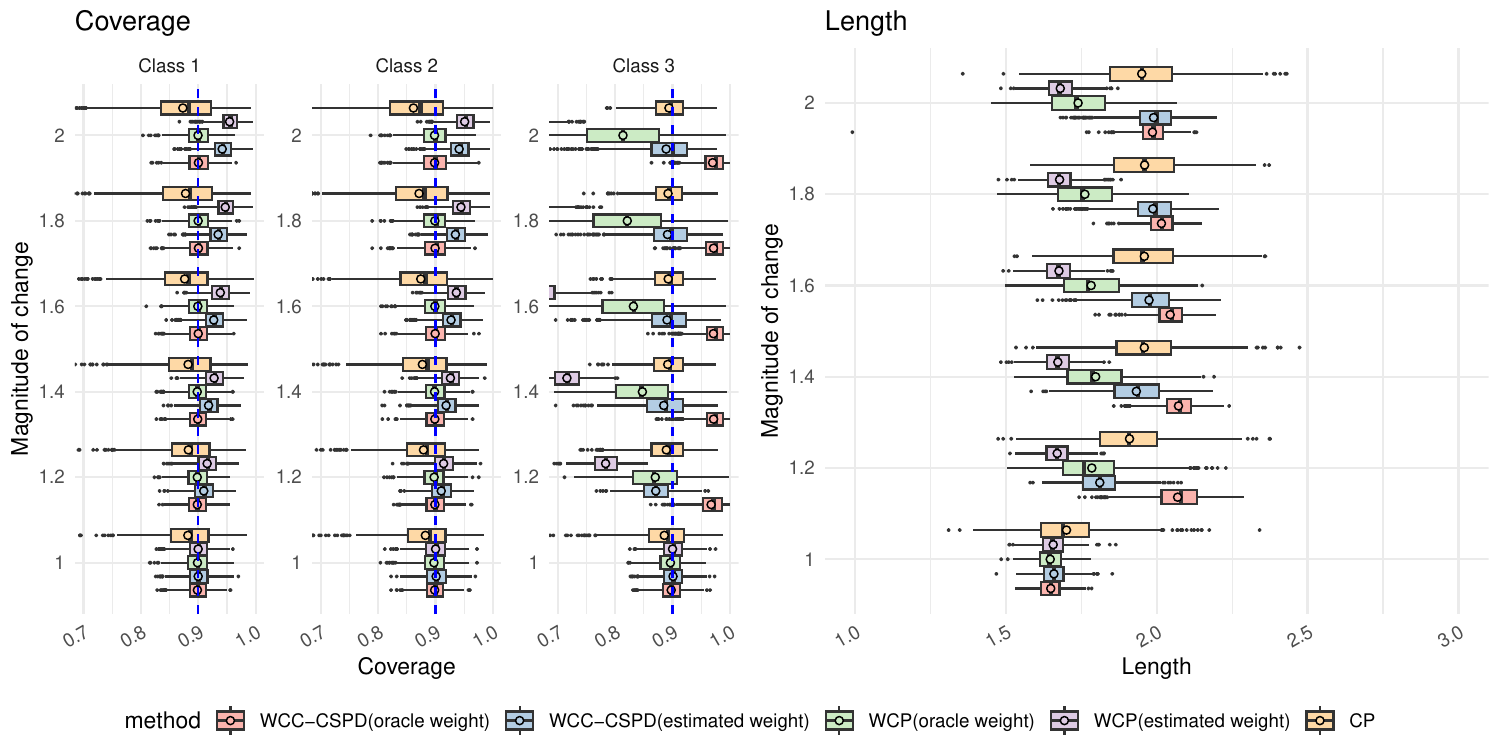}
    \caption{Performance of all baselines in the simulation setup described in Section \ref{sec:simulation}. The blue vertical lines correspond to the target coverage level ($1 - \alpha = 0.9$). The average marginal coverage rates (over all $r$) are: WCC-CSPD (oracle weight, estimated weight) at 0.932, 0.903; WCP (oracle weight, estimated weight) at 0.869, 0.811; CP at 0.885.} 
    \label{fig5}
\end{figure*}

\section{ASYMPTOTIC PROPERTIES}\label{sec:asymp}
Denote the (true) confidence set of class $j$ at level $\alpha$  to be $C_{j,\alpha}:= \left\{x:\eta_{P,j}(x) \geq t_{j,\alpha}\right\}$. This may be viewed as the dual form of the set-valued classifier. Specifically, the set-valued classifier $\hat{C}(x)=\left\{j:\hat{\eta}_{P,j}(x)>\hat{t}_{j,\alpha} \right\}$ in Algorithm \hyperref[alg1]{1} is induced by $\hat{C}_{j,\alpha} = \left\{x:\hat{\eta}_{P,j}(x) \geq \hat{t}_{j,\alpha}\right\}$. To evaluate the performance of $\hat{C}_{j,\alpha}$ as an estimate of $C_{j,\alpha}$, we focus on measuring $\hat{C}_{j,\alpha}\triangle C_{j,\alpha}$ in this section, where $\hat{C}\triangle C := (\hat{C}\setminus C)\cup (C\setminus \hat{C})$ denotes the symmetric difference between $\hat{C}$ and $C$.

Denote $G_j(t)=Q_j(\left\{ x: \eta_{P,j}(x)\leq t\right\})$ the empirical distribution function of the random variable $\eta_{P,j}(X)$. Let $\mathbb{Q}_j$ denote the probability measure under $G_j.$ Consider the following assumptions:\\
\textbf{(A)}. $\hat{\eta}_{P,j}$ is a $(\delta_m, \theta_m)$-accurate estimator: $P^m(\|\hat{\eta}_{P,j} - \eta_{P,j}\|_\infty \geq \delta_m)\leq \theta_m \forall j$ as $m\rightarrow\infty.$ \\
\textbf{(B)}. There exist constants $b_1, b_2, d_0$ and $\lambda >0$ such that for all $d\in\left[-d_0,d_0\right]$, 
\[b_1 |d|^\lambda \leq | G_j(t_j+d)-G_j(t_j) | \leq b_2 |d|^\lambda, \forall j\]
Assumption \textbf{(A)} \citep{lei2014classification} requires $(\delta_m, \theta_m)$-accurate estimators. Specific examples of such estimators with explicit rates for $\delta_m$ and $\theta_m$, over a broad class of models, can be found in \citep{audibert2007fast, van2008high}, including the local polynomial regression and $l_1$-penalized logistic regression. Assumption \textbf{(B)}, which is a version of the margin assumption (MA) \citep{audibert2005fast}, suggests that there are few data points near the threshold.

\begin{theorem}\label{thm2}
Define $m = |\mathcal{S}_1|$ as the size of data set used to estimate $\hat{\eta}_{P,j}$, and $n_{j} = |\mathcal{R}^j|$ as the size of calibration set for class $j$. Under $\gCSPD$ assumption at $\alpha$ and assumptions \textbf{(A)}, \textbf{(B)}, if $Q_{X|Y=j}$ is absolutely continuous with respect to $P_{X|Y=j}$ and $\tilde{w}_{ij}$ are bounded, then for each $r>0$, there exists a positive constant $c$ such that for $m$ and $n_{j}$ large enough, with probability at least $1-\theta_{m}-n_{j}^{-r}$
\[\mathbb{Q}_j(\hat{C}_{j,\alpha}\triangle C_{j,\alpha})\leq c\left\{ \delta_{m}^\lambda  + \left( \frac{\log n_{j}}{n_{j}}\right)^{\frac{1}{2}}\right\}.\]
\end{theorem}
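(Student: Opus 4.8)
The plan is to split the error $\hat C_{j,\alpha}\triangle C_{j,\alpha}$ into the contribution of estimating the score function $\hat\eta_{P,j}$ (controlled by (A)) and the contribution of estimating the threshold $\hat t_{j,\alpha}$ (controlled by a concentration argument for the weighted empirical quantile), and then to glue the two together with the margin condition (B).

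\textbf{Step 1 (reduce the symmetric difference to a thin band).} I would work on the event $\mathcal{E}_m=\{\|\hat\eta_{P,j}-\eta_{P,j}\|_\infty<\delta_m\}$, which by (A) has probability at least $1-\theta_m$. If $x\in\hat C_{j,\alpha}\setminus C_{j,\alpha}$ then $\hat\eta_{P,j}(x)\ge\hat t_{j,\alpha}$ and $\eta_{P,j}(x)<t_{j,\alpha}$, so on $\mathcal{E}_m$ one gets $t_{j,\alpha}-|\hat t_{j,\alpha}-t_{j,\alpha}|-\delta_m<\eta_{P,j}(x)<t_{j,\alpha}$, and an identical two-sided bound holds for $x\in C_{j,\alpha}\setminus\hat C_{j,\alpha}$. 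Hence $\hat C_{j,\alpha}\triangle C_{j,\alpha}\subseteq\{x:|\eta_{P,j}(x)-t_{j,\alpha}|\le\rho\}$ with $\rho:=|\hat t_{j,\alpha}-t_{j,\alpha}|+\delta_m$, and the upper bound in (B) (applicable once $\rho\le d_0$) yields $\mathbb{Q}_j(\hat C_{j,\alpha}\triangle C_{j,\alpha})\le 2b_2\rho^\lambda\lesssim|\hat t_{j,\alpha}-t_{j,\alpha}|^\lambda+\delta_m^\lambda$. It remains to prove $|\hat t_{j,\alpha}-t_{j,\alpha}|\lesssim\delta_m+(\log n_j/n_j)^{1/(2\lambda)}$ with probability at least $1-n_j^{-r}$.

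\textbf{Step 2 (threshold error).} Let $\tilde t_{j,\alpha}$ be the weighted $\alpha$-quantile computed from the \emph{true} scores $\eta_{P,j}(x_i)$ and the same weights; on $\mathcal{E}_m$ every score moves by at most $\delta_m$, so $|\hat t_{j,\alpha}-\tilde t_{j,\alpha}|\le\delta_m$. Write $\tilde F_j(t)=\sum_{i\in\mathcal{R}^j}\tilde w_{ij}\mathds{1}[\eta_{P,j}(x_i)\le t]$. The weighted-exchangeability identity behind the proof of Lemma~\ref{lemma1} gives $\E\big[\sum_{i\in\mathcal{R}^j\cup\{T\}}\tilde w_{ij}\mathds{1}[\eta_{P,j}(x_i)\le t]\big]=\mathbb{Q}(\eta_{P,j}(X_T)\le t\mid Y_T=j)=G_j(t)$, and since the weights are bounded (so $\tilde w_{Tj}=O(1/n_j)$) this makes $\tilde F_j$ an asymptotically unbiased estimator of $G_j$ with bias $O(1/n_j)$. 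I would then establish $\sup_t|\tilde F_j(t)-G_j(t)|\lesssim(\log n_j/n_j)^{1/2}$ with probability at least $1-n_j^{-r}$ by a bounded-differences (McDiarmid) argument over the $N_j+1$ independent class-$j$ covariates in $\mathcal{R}^j\cup\{T\}$, the crucial point being that replacing one of them perturbs $\tilde F_j(t)$ by only $O(1/n_j)$. This follows from the identity $\tilde w_{ij}\propto w^j(x_i)\,e_{N_j^T}(\{w^j(x_c):c\ne i,\ c\in\mathcal{R}^j\cup\{T\}\})$ obtained via Newton's identities: altering a single point multiplies \emph{every} factor $e_{N_j^T}(\{w^j(x_c):c\ne i\})$ (for $i$ different from that point) by essentially the same scalar, which cancels in the normalization, so each individual weight moves by $O(1/n_j^2)$ and $\tilde F_j(t)$ by $O(1/n_j)$. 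Finally a two-point argument — evaluating $\tilde F_j$ at $t_{j,\alpha}\pm\epsilon$ and invoking the lower bound $|G_j(t_{j,\alpha}\pm\epsilon)-G_j(t_{j,\alpha})|\ge b_1\epsilon^\lambda$ from (B) — forces $\tilde F_j(t_{j,\alpha}-\epsilon)<\alpha\le\tilde F_j(t_{j,\alpha}+\epsilon)$, hence $|\tilde t_{j,\alpha}-t_{j,\alpha}|\le\epsilon$, as soon as $\epsilon\gtrsim(\log n_j/n_j)^{1/(2\lambda)}$. Substituting $|\hat t_{j,\alpha}-t_{j,\alpha}|\le\delta_m+|\tilde t_{j,\alpha}-t_{j,\alpha}|$ into Step~1 and taking a union bound over $\mathcal{E}_m$ and the concentration event gives the claimed rate with probability at least $1-\theta_m-n_j^{-r}$.

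\textbf{Main obstacle.} The delicate step is the concentration of $\tilde F_j$: because each weight $\tilde w_{ij}$ depends on the whole class-$j$ sample, $\tilde F_j(t)$ is not a sum of independent terms and the Dvoretzky--Kiefer--Wolfowitz bound does not apply directly. Making the bounded-differences constant genuinely of order $1/n_j$ — i.e. verifying, uniformly over all admissible single-coordinate perturbations, that the common multiplicative factor in the ratio defining $\tilde w_{ij}$ cancels — is precisely where the boundedness hypothesis on the weights (and, implicitly, on $w^j$ away from $0$ and $\infty$) is used. An alternative, should that sensitivity bookkeeping prove cumbersome, is to compare $\tilde F_j$ with the idealized self-normalized importance-weighted CDF $\sum_i\big(w^j(x_i)/\sum_{i'}w^j(x_{i'})\big)\mathds{1}[\eta_{P,j}(x_i)\le t]$, to which a weighted Bernstein/DKW inequality applies cleanly, and to control the discrepancy separately.
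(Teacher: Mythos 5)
Your overall architecture is the same as the paper's: restrict to the event from \textbf{(A)}, show $|\hat t_{j,\alpha}-t_{j,\alpha}|\le \delta_m + C(\log n_j/n_j)^{1/(2\lambda)}$ by combining a uniform $\sqrt{\log n_j/n_j}$ deviation bound for the weighted empirical distribution of the scores with the lower bound $b_1$ in \textbf{(B)}, and then convert the threshold error into the bound on $\mathbb{Q}_j(\hat C_{j,\alpha}\triangle C_{j,\alpha})$ via the upper bound $b_2$ in \textbf{(B)}. Your intermediate quantile $\tilde t_{j,\alpha}$ computed from the true scores, versus the paper's direct sandwich $\hat{\mathbb{Q}}_j[\hat L_j(t)]\le \hat G_j(t+\delta_m)$ on the sup-norm event, is only a cosmetic difference, and your rate bookkeeping in Step 1 is correct.

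Where you genuinely diverge is the concentration step, and that is also where your argument has a real gap. The paper does not use a bounded-differences argument: it uses the assumed bounds $a<\tilde w_{ij}<b$ to lower-bound the effective sample size, $\|\tilde w\|_1^2/\|\tilde w\|_2^2\ge n_j a/b$, and then invokes the weighted empirical-distribution inequality of Proposition 3.1 in \citet{chen2019nearest} to get $\sup_t|\hat G_j(t)-G_j(t)|\le c_r\sqrt{\log n_j/n_j}$ with probability at least $1-n_j^{-r}$. Your route instead rests on the claim that perturbing a single calibration point changes every weight \eqref{weight} by $O(1/n_j^2)$, because the perturbation ``multiplies every $e_{N_j^T}(\{w^j(x_c):c\ne i\})$ by essentially the same scalar, which cancels in the normalization.'' That is exactly the statement that needs proof, and it does not follow from boundedness of $w^j$ by crude estimates: replacing $w^j(x_k)$ by another value changes $e_{N_j^T}(\{w^j(x_c):c\ne i\})$ additively by the increment times $e_{N_j^T-1}(\{w^j(x_c):c\ne i,k\})$, and naive bounds on ratios of the form $e_{N_j^T-1}/e_{N_j^T}$ with arguments in an interval $[a',b']$ degrade like $(b'/a')^{N_j^T}$; establishing a uniform $O(1/n_j)$ bounded-difference constant therefore requires a genuine argument about these symmetric-polynomial ratios (the analogy with the pure-$\CS$ case $\tilde w_{ij}=w^j(x_i)/\sum_c w^j(x_c)$, where the claim is easy, does not transfer automatically), not bookkeeping. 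You flag this yourself, and your fallback---comparing with the self-normalized weights $w^j(x_i)/\sum_{i'}w^j(x_{i'})$ and applying a weighted DKW/Bernstein inequality---is essentially the paper's device and is the cleaner way to close the step. Two minor remarks: the exchangeability-based unbiasedness computation for $\tilde F_j$ is not needed once a uniform deviation bound is available, and the $O(1/n_j)$ slack caused by dropping the point mass $\tilde w_{Tj}\delta_\infty$ is absorbed in the paper by the elementary inequality $1-\alpha-c_r\sqrt{\log n_j/n_j}<1-\alpha-n_j^{-1}$, which you would need in the same place in your two-point argument.
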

Theorem \ref{thm2} establishes a convergence rate comparable to those found in related estimation and classification problems \citep{lei2014classification, sadinle2019least, scott2019generalized}, with the key difference being the use of $n_j$ instead of $n$. The advantage of our approach lies in the fact that $n_j$ is larger, as it incorporates both the source and target samples during calibration, which leads to faster convergence. In contrast, previous work either excludes the target sample in the calibration step \citep{tibshirani2019conformal} or excludes the source sample during calibration \citep{scott2019generalized}, resulting in a smaller sample size and slower convergence.

\begin{figure*}[!t]
    \centering
    \includegraphics[width=\linewidth]{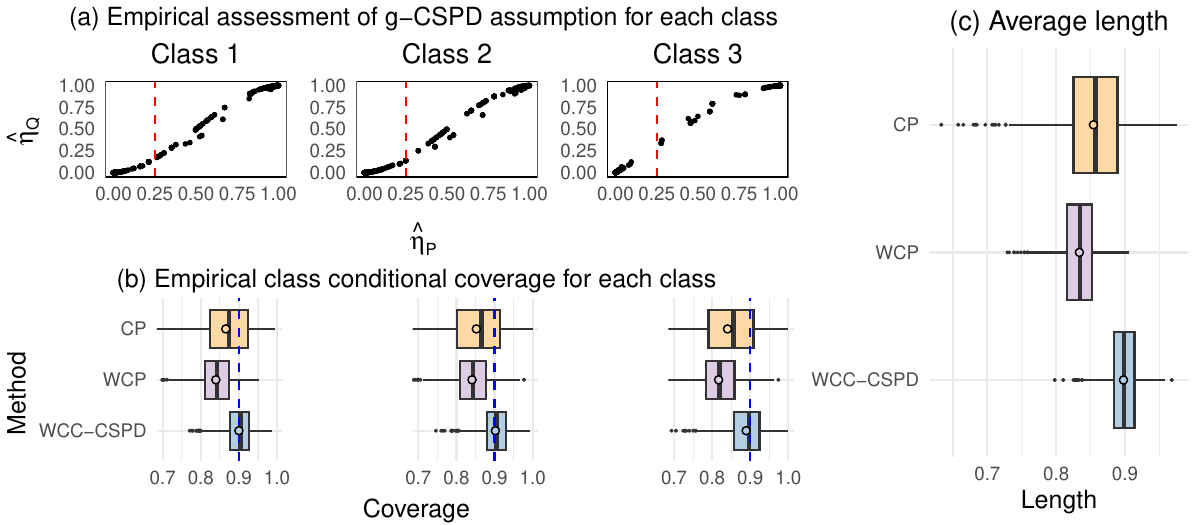}
    \caption{Performance of all baselines in the semi-synthetic setup described in Section \ref{sec:simulation}. In (a), the red dashed line illustrates that $\gCSPD$ holds at the corresponding level. The marginal coverage rates are: WCC-CSPD at 0.898, WCP at 0.834, CP at 0.854.}
    \label{fig6}
\end{figure*}
\section{NUMERICAL STUDIES}\label{sec:simulation}
%\subsection{Experimental setup}\label{sec:simulation}
\textbf{Simulation study.} We create synthetic datasets of covariates $X\in\mathbb{R}^{5}$ and class $Y\in\left\{1,2,3\right\}$. For each trial, we first sample 3000 points from the following distributions (1000 from each class),
\[x_1\sim\begin{cases}
N(-3,1) & \text{ if } y= 1\\
N(-2,1) & \text{ if } y= 2\\
N(0,1) & \text{ if } y= 3
\end{cases}\ ,x_j\sim N(0,1), j=2,\dots,5\]
Then we randomly partition the data into two equal-sized subsets. The first half serves as the source training sample. In the second half, we compute 
$\eta_{Q,j}(x) = \phi_j^{-1}(\eta_{P,j}(x))$ given a function $\phi_j$, and then relabel the class accordingly. For $j = 1,2$, we choose $\phi^{-1}_j(t) = t^r$ and we simulate different scenarios by varying the exponents $r$ to represent different magnitudes of shift (see the y-axis of the left panel in Figure \ref{fig5}). Under this setting, it can be verified that the posterior for the third class $\eta_{Q,3}(x) = 1 - \eta_{Q,1}(x) - \eta_{Q,2}(x)$ also satisfies the $\CSPD$ assumption. To mimic scenarios where only a few labeled target data points are available for training, we further split the target sample into training set and test set. The ratio of source training, target training, and target test sets is set to be $5:1:4$.  Throughout the experiments, we repeat each scenario for 1000 trials and set $\alpha = 0.1$.

\textbf{Semi-synthetic dataset.} We also consider the Maternal Health Risk data from UCI Machine Learning Repository \citep{misc_maternal_health_risk_863}, which has $N = 1013$ instances, consisting of a class label $Y$ and feature $X$ with $6$ dimensions. We use a similar strategy as stated in the simulation study to partition the data and reassign the labels, ensuring the $\gCSPD$ assumption holds for all classes and for most of the $\alpha$ levels (See Figure \ref{fig6} (a)). %The only difference compared to the simulation is that, we have to estimate $\hat{\eta}_{P,j}$. 
Details of the data description and the label generation can be found in Appendix B.

\textbf{Baselines.} We examine weighted conformal prediction (WCP) as proposed in \citet{tibshirani2019conformal}. The original WCP algorithm, designed for regression problems, does not incorporate labeled target data in the training procedure. We also explore the split conformal prediction (CP) \citep{vovk2005algorithmic, lei2014classification}, which is trained exclusively on the labeled target data.

For our proposed method (WCC-CSPD) and WCP, we evaluate both scenarios where we have the oracle weight (only for simulation datasets) and where we estimate the initial weight function $w^j(x)$ using the training source and target samples (the same strategy as \citep{tibshirani2019conformal}.) Details of the weight estimation are provided in Appendix B. For all methods, we use the same model (XGBoost) for estimating $\hat{\eta}_{P,j}$ and $\hat{w}^j(x)$. We assess performance using two commonly adopted metrics: class-conditional coverage and average prediction set length. Class-conditional coverage measures the proportion of instances for which the true class label is contained within the prediction set. Average prediction set length refers to the expected size of the prediction set.
%\subsection{Results} 

\textbf{Results.} In all scenarios where the $\CSPD$ assumption holds, WCC-CSPD consistently achieved the desired $1-\alpha$ coverage, regardless of whether the oracle weight was available or not. Additionally, even when the marginal coverage of two baselines falls short, the average prediction length of WCC-CSPD are not significantly larger. In Figure \ref{fig5}, when there is no shift (i.e., $r=1$), WCC-CSPD performs similarly to WCP. However, when using only target data, CP, due to its limited data points (only 100 per class $j$), tends to under-cover and meanwhile produces the largest average length. As the shift magnitude $r$ increased, WCC-CSPD ensures both class-conditional and marginal coverage, whereas WCP fails to cover. Focusing on the third class, since the probability of a target sample being relabeled as the third class $\eta_{Q,3}=1-\eta_{Q,1}-\eta_{Q,2}$ increased with $r$, more target points were relabeled into class 3. This results in an improvement in CP's performance for the third class as $r$ increased. However, due to the increasing change in the third class, the effective sample size for WCP decreases as it is trained only on source data, resulting in worsening performance. Unlike WCP, WCC-CSPD does not suffer from a reduced effective sample size due to the inclusion of more target data points, thereby maintaining sufficient coverage. Figure \ref{fig6}(a) shows that when the $\gCSPD$ assumption is satisfied, the marginal coverage for WCC-CSPD is 0.898. This slight deviation from the intended $1-\alpha$ coverage is reasonable, considering the use of estimated weights. In contrast, due to a smaller sample size compared to the simulation, CP fails to reach desire coverage level. WCP suffers from a smaller effective sample size.

\section{CONCLUSION}\label{sec:conclude}
We present a comprehensive framework for weighted conformal classification under the $\CSPD$ assumption. Our proposed method alleviates the computational issue and can leverage abundant labeled source data alongside scarce target data to construct prediction sets with desired coverage guarantees. The theoretical contributions of this work, supported by rigorous proofs and empirical validations through simulations and semi-synthetic experiments, underline the effectiveness of our methods in achieving the desired coverage probabilities under $\CSPD$ and $\gCSPD$ assumption. Future work will explore the multi-source scenarios (see Appendix C for discussion), further generalizations of the $\CSPD$, and the validation of the $\CSPD$ assumption.

\bibliographystyle{plainnat}
\bibliography{Conformal_Prediction_Under_Generalized_Covariate_Shift_with_Posterior_Drift}
%%%%%%%%%%%%%%%%%%%%%%%%%%%%%%%%%%%%%%%%%%%%%%%%%%%%%%%%%%%%

\appendix

\section{PROOFS}\label{Appendix:proofs}

\subsection{Weighted Quantile Lemma}\label{Appendix:wqL}
\citet{tibshirani2019conformal} defined a generalized notion of exchangeability called weighted exchangeability.

\noindent\textbf{Definition 1 \citep{tibshirani2019conformal}.} We call random variables $V_1, \ldots, V_n$ weighted exchangeable, with weight functions $w_1, \ldots, w_n$, if the density $f$ of their joint distribution can be factorized as
$$
f(v_1, \ldots, v_n) = \prod_{i=1}^{n} w_i(v_i) \cdot g(v_1, \ldots, v_n),
$$
where $g$ does not depend on the ordering of its inputs, i.e., $g(v_{\sigma(1)}, \ldots, v_{\sigma(n)}) = g(v_1, \ldots, v_n)$ for any permutation $\sigma$ of $1, \ldots, n$.

According to Lemma 2 in \citep{tibshirani2019conformal}, under $\CS$ and $\CSPD$, all $Z_i$ are weighted exchangeable, with weight function $w = dQ_X/dP_X$ for the source sample and weight function $w = 1$ for the target sample. Then we can utilize the following Lemma \ref{lemmaTib} to prove Lemma \ref{lemma1}.

\begin{lemma}[\citet{tibshirani2019conformal}]\label{lemmaTib}
Let $Z_i, i = 1, \ldots, n + 1$ be weighted exchangeable, with weight functions $w_1, \ldots, w_{n+1}$. Let $V_i = S(Z_i, Z_{1:(n+1)})$, for $i = 1, \ldots, n + 1$, and $S$ is an arbitrary score function. Define
$$p_i^w (z_{1}, \ldots, z_{n+1}) = \frac{\sum_{\sigma:\sigma(n+1)=i} \prod_{j=1}^{n+1} w_j(z_{\sigma(j)})}{\sum_{\sigma} \prod_{j=1}^{n+1} w_j(z_{\sigma(j)})},$$  $i = 1, \ldots, n + 1,$ where the summations are taken over permutations $\sigma$ of the numbers $1, \ldots, n + 1$. Then for any $\beta \in (0, 1)$,
\begin{align*}
 P\Biggl(V_{n+1} \leq \text{Quantile}\Bigl(&\beta; \sum_{i=1}^{n}  p_i^w (Z_{1}, \ldots, Z_{n+1})\delta_{V_i}\\ &+ p_{n+1}^w(Z_{1}, \ldots, Z_{n+1})\delta_{\infty}\Bigr)\Biggr) \geq \beta.  
\end{align*}

\end{lemma}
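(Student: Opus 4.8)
The plan is to reduce the lemma to a combinatorial statement about a discrete distribution by conditioning on the unordered collection $\{Z_1,\dots,Z_{n+1}\}$. First I would condition on the event that the multiset $\{Z_1,\dots,Z_{n+1}\}$ equals a given multiset $E_z=\{z_1,\dots,z_{n+1}\}$ and compute the conditional law of the ordered tuple $(Z_1,\dots,Z_{n+1})$. Since $Z_{1:(n+1)}$ is weighted exchangeable, its joint density factors as $\prod_{j=1}^{n+1}w_j(z_j)\cdot g(z_1,\dots,z_{n+1})$ with $g$ symmetric; hence, conditional on $E_z$, the tuple equals a particular ordering $(z_{\sigma(1)},\dots,z_{\sigma(n+1)})$ with probability proportional to $\prod_{j=1}^{n+1}w_j(z_{\sigma(j)})$, the symmetric factor $g$ cancelling in the normalization. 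Summing over the permutations $\sigma$ with $\sigma(n+1)=i$ then shows that, conditionally, $Z_{n+1}$ takes the value $z_i$ with probability exactly $p_i^w(z_1,\dots,z_{n+1})$.

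Next, because $V_i=S(Z_i,Z_{1:(n+1)})$ and the second argument is an unordered collection, on the event $E_z$ we may write $v_i:=S(z_i,E_z)$, which depends only on the multiset; hence conditional on $E_z$ the score $V_{n+1}$ has the discrete law $\sum_{i=1}^{n+1}p_i^w\delta_{v_i}$. It then remains to compare the cutoff appearing in the statement, $q:=\text{Quantile}(\beta;\sum_{i=1}^{n}p_i^w\delta_{v_i}+p_{n+1}^w\delta_\infty)$, to $\text{Quantile}(\beta;\sum_{i=1}^{n+1}p_i^w\delta_{v_i})$. I would invoke the elementary monotonicity of quantiles: the first measure is obtained from the second by moving the mass $p_{n+1}^w$ from $v_{n+1}$ to $+\infty$, which stochastically increases the distribution, so $q\ge\text{Quantile}(\beta;\sum_{i=1}^{n+1}p_i^w\delta_{v_i})$.

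Finally, by the defining property of the lower $\beta$-quantile applied to the conditional law of $V_{n+1}$ we get $P\bigl(V_{n+1}\le \text{Quantile}(\beta;\sum_{i=1}^{n+1}p_i^w\delta_{v_i})\mid E_z\bigr)\ge\beta$; combining with the previous inequality gives $P(V_{n+1}\le q\mid E_z)\ge\beta$ for almost every realization of the multiset, and since $q$ is itself a measurable function of the multiset, integrating over $E_z$ yields the unconditional conclusion. The step I expect to be the main obstacle is the measure-theoretic bookkeeping of the ``condition on the unordered collection'' argument: formalizing it through the $\sigma$-algebra generated by the empirical measure of $Z_{1:(n+1)}$ (or via the symmetrization device of \citet{tibshirani2019conformal}) and handling ties among the $z_i$, and correspondingly among the $v_i$, so that both the combinatorial identity defining $p_i^w$ and the quantile comparison remain valid. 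When the score distribution is continuous, ties occur with probability zero and the argument is clean, but the general case genuinely needs the multiset formalism.
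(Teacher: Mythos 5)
Your proposal is correct and takes essentially the same route as the proof the paper relies on: the paper states this lemma only by citation to \citet{tibshirani2019conformal}, and your argument---conditioning on the unordered multiset so that the conditional law of $V_{n+1}$ is $\sum_{i=1}^{n+1}p_i^w\delta_{v_i}$, then using quantile monotonicity when $\delta_{v_{n+1}}$ is replaced by $\delta_{\infty}$, and integrating over the multiset---is precisely the argument given there. The tie-handling and measurability issues you flag are handled in that reference by exactly the multiset-conditioning formalism you outline, so no genuinely new idea is needed.
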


\subsection{Proof of Lemma 1}\label{Appendix:Lemma1}
\setcounter{lemma}{0}
\setcounter{theorem}{0}
\begin{lemma}
Obtain the set-valued classifier based on Algorithm \hyperref[alg1]{1}, that is, estimate $\hat{\eta}_{P,j}$ using the first split of the source training data $\mathcal{S}_1$ and classifier $\mathcal{A}$, evaluate $\hat{\eta}_{P,j}(x)$ on $\mathcal{R}^j\cup \{T\}$ (the second split of the class $j$ source training data, the target training data, and the test data point), compute the weights $\tilde w_{ij}$ and $\tilde w_{Tj}$ according to \eqref{weight}, and 
\begin{equation}
    \hat{t}_{j,\alpha}=\text{Quantile} \left(\alpha, \sum_{{i\in \mathcal{R}^j}}\tilde{w}_{ij} \delta_{\hat{\eta}_{P,j}(x_i)}+\tilde{w}_{Tj}\delta_{\infty}\right).
\end{equation}
Then we have,
\[
\mathbb{Q}\left( \hat{\eta}_{P,j}(X_{T}) \geq \hat{t}_{j,\alpha} | Y_{T} = j \right) \geq 1-\alpha,
\]
where $\mathbb{Q}$ is with respect to all the data points in $\mathcal{S}_2\cup \mathcal{T}\cup \{T\}$
\end{lemma}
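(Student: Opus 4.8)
The plan is to reduce Lemma~\ref{lemma1} to the weighted quantile result, Lemma~\ref{lemmaTib}, borrowed from \citet{tibshirani2019conformal}. The first step is to restrict attention to the class-$j$ subpopulation: conditional on $Y_T = j$, the relevant data points are exactly those indexed by $\mathcal{R}^j \cup \{T\}$, i.e., the class-$j$ points from the second split of the source sample ($\mathcal{S}_2 \cap \mathcal{I}^j$), the class-$j$ points from the target training sample ($\mathcal{T} \cap \mathcal{I}^j$), and the test point $T$. I would relabel these $N_j := |\mathcal{R}^j \cup \{T\}|$ points as $Z_1, \dots, Z_{N_j}$ with $Z_{N_j}$ playing the role of the test point, so that the setting matches Lemma~\ref{lemmaTib} with $n+1 = N_j$.

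The key step is to verify weighted exchangeability of these $N_j$ points with the stated weight functions. Each source point in $\mathcal{S}_2 \cap \mathcal{I}^j$ is an i.i.d.\ draw from $P_{X \mid Y=j}$, and each target point in $(\mathcal{T} \cap \mathcal{I}^j) \cup \{T\}$ is an i.i.d.\ draw from $Q_{X \mid Y=j}$. Since $Q_{X\mid Y=j} \ll P_{X\mid Y=j}$ with Radon–Nikodym derivative $w^j$, the joint density of the $N_j$ feature values factorizes as $\prod_{i} w_{ij}(x_i) \cdot g(x_1, \dots, x_{N_j})$ where $w_{ij} \equiv 1$ on the source indices, $w_{ij} = w^j$ on the target indices, and $g$ is the permutation-symmetric product of the common reference density $\prod_i dP_{X\mid Y=j}(x_i)$. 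This is exactly Definition~1 of weighted exchangeability. I would then apply the score function $S(\cdot,\cdot)$ given by evaluating $\hat{\eta}_{P,j}$, trained on $\mathcal{S}_1$ (which is disjoint from $\mathcal{R}^j \cup \{T\}$, so $\hat{\eta}_{P,j}$ is fixed given the first split and does not disturb exchangeability of the calibration/test points); note $V_i = \hat\eta_{P,j}(x_i)$ depends only on the $i$-th point, which is a legitimate special case of a score function. The permutation-based weights $p_i^w$ of Lemma~\ref{lemmaTib} then coincide exactly with $\tilde w_{ij}$ as defined in \eqref{weight}, because the products $\prod_k w_{kj}(x_{\sigma(k)})$ match term-by-term.

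With weighted exchangeability established, Lemma~\ref{lemmaTib} applied with $\beta = 1 - \alpha$ would give
\[
\mathbb{Q}\left(\hat\eta_{P,j}(X_T) \leq \text{Quantile}\Bigl(1-\alpha;\ \textstyle\sum_{i \in \mathcal{R}^j} \tilde w_{ij}\,\delta_{\hat\eta_{P,j}(x_i)} + \tilde w_{Tj}\,\delta_{\infty}\Bigr) \;\Big|\; Y_T = j\right) \geq 1-\alpha.
\]
The final step is to translate between the upper-$(1-\alpha)$-quantile event and the lower-$\alpha$-quantile threshold $\hat t_{j,\alpha}$ of \eqref{threshold}: the complement event $\{\hat\eta_{P,j}(X_T) \geq \hat t_{j,\alpha}\}$ has probability at least $1-\alpha$ by the standard identity relating the $\alpha$- and $(1-\alpha)$-quantiles of a discrete weighted measure, which is the same bookkeeping used in the unweighted split-conformal argument.

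I expect the main obstacle to be the careful verification of the weighted exchangeability factorization for the \emph{mixed} source/target calibration set — in particular, making sure that conditioning on $Y_T = j$ and on the first split $\mathcal{S}_1$ (used to fit $\hat\eta_{P,j}$) does not break the symmetry of the factor $g$, and that the permutation sums in \eqref{weight} genuinely reproduce the $p_i^w$ of Lemma~\ref{lemmaTib} once one accounts for the constant-weight source entries. The quantile direction-flipping at the end is routine, and the reduction to the class-$j$ subpopulation is immediate, so the whole argument is essentially an application of the machinery of \citet{tibshirani2019conformal} after correctly identifying the weight functions.
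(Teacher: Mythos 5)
Your overall route is the same as the paper's: restrict attention to the class-$j$ points $\mathcal{R}^j\cup\{T\}$, verify weighted exchangeability with weight function $1$ on the source points and $w^j$ on the target points, identify the permutation weights $p_i^w$ of Lemma \ref{lemmaTib} with the $\tilde w_{ij}$ of \eqref{weight}, and invoke the weighted quantile lemma. Your verification of the factorization, of the role of the split $\mathcal{S}_1$, and of the matching of the weights is correct and in fact more explicit than the paper's.

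However, the final step has a genuine gap. You apply Lemma \ref{lemmaTib} to the raw score $V_i=\hat\eta_{P,j}(x_i)$ with $\beta=1-\alpha$, which yields $\mathbb{Q}\bigl(\hat\eta_{P,j}(X_T)\le \mathrm{Quantile}(1-\alpha;\cdot)\bigr)\ge 1-\alpha$, i.e.\ the test score is not in the \emph{upper} tail. The event you need, $\{\hat\eta_{P,j}(X_T)\ge \hat t_{j,\alpha}\}$ with $\hat t_{j,\alpha}$ a lower $\alpha$-quantile, is not the complement of your event: the complement is $\{\hat\eta_{P,j}(X_T)> \mathrm{Quantile}(1-\alpha;\cdot)\}$, whose probability is bounded \emph{above} by $\alpha$, and there is no inclusion between $\{V\le q_{1-\alpha}\}$ and $\{V\ge q_{\alpha}\}$: a test point whose score falls below $q_\alpha$ satisfies your displayed event yet violates the coverage event, and your bound does not control the probability of this happening. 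Since Lemma \ref{lemmaTib} as stated only produces lower bounds on probabilities of the form $P(V_{n+1}\le \text{quantile})$, no ``standard identity relating the $\alpha$- and $(1-\alpha)$-quantiles'' can convert your display into the claim. The paper's proof avoids this by applying Lemma \ref{lemmaTib} to the negated (nonconformity) score $V_{i,j}=-\hat\eta_{P,j}(X_i)$ with $\beta=1-\alpha$: the lower-tail threshold on $\hat\eta_{P,j}$ becomes an upper-tail threshold on $-\hat\eta_{P,j}$, which is exactly the form the lemma covers, and flipping the sign back produces the $\alpha$-quantile threshold of \eqref{threshold} (with some bookkeeping about where the point mass $\tilde w_{Tj}\delta_{\infty}$ lands under the reflection, which the paper itself treats only briefly). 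With that single change --- negate the score before invoking the lemma --- the rest of your argument goes through.
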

\begin{proof}
 Since $Q_{X|Y=j}$ is absolutely continuous with respect to $P_{X|Y=j}$, we have $\hat{\eta}_{P,j}(X_i)$ are weighted exchangeable for $i\in \mathcal{R}^j$. Then in Lemma \ref{lemmaTib}\citep{tibshirani2019conformal} and for a class $j$, we let $V_{i,j} = -\hat{\eta}_{P,j}(X_i) \leq 0$ be the score function, then we have 
 \begin{align*}
     \mbbQ\Bigl(-\hat{\eta}_{P,j}(X_T) \leq \text{Quantile} (\beta, \sum_{i\in\mathcal{S}^j\cup\mathcal{T}^j} & \mathds{1}_{(Y_i=j)}\tilde{w}_{ij}\delta_{-\hat{\eta}_{P,j}(X_i)}\\
     &+\tilde{w}_{Tj}\delta_{\infty})\Bigr)\geq \beta,
 \end{align*}
 
 which immediately is equivalent to
 \begin{align*}
     \mbbQ\Bigl(\hat{\eta}_{P,j}(X_T) \geq \text{Quantile} (1 - \beta, \sum_{i\in\mathcal{S}^j\cup\mathcal{T}^j} &\mathds{1}_{(Y_i=j)}\tilde{w}_{ij}\delta_{\hat{\eta}_{P,j}(X_i)}\\&+\tilde{w}_{Tj}\delta_{\infty})\Bigr)\geq \beta.
 \end{align*}
 
 Replacing $\beta$ by $1 - \alpha$, we have Lemma 1 proofed.
\end{proof}

\subsection{Proof of Theorem 1}\label{Appendix:proof1}
\begin{theorem}
Assume for every class $j$, $Q_{X|Y=j}$ is absolutely continuous w.r.t. $P_{X|Y=j}$, and $\CSPD$ assumption holds. For a test point $(X_T, Y_T)$, let $\hat{C}(X_T)$ be the set-valued classifier obtained from Algorithm \hyperref[alg1]{1}. Then $\forall\alpha\in (0,1),$ we have
\[\mathbb{Q}\left(Y_T \in \hat{C}(X_T) \right) \geq 1-\alpha,\]
where $\mathbb{Q}$ is with respect to all the data points in $\mathcal{S}_2\cup \mathcal{T}\cup \{T\}$.
\end{theorem}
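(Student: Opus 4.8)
The plan is to derive Theorem 1 from Lemma 1 together with the $\CSPD$ assumption, by first establishing class-conditional coverage and then summing over classes. For each class $j$, Lemma 1 already gives
\[
\mathbb{Q}\left( \hat{\eta}_{P,j}(X_{T}) \geq \hat{t}_{j,\alpha} \mid Y_{T} = j \right) \geq 1-\alpha .
\]
The issue is that the event $\{Y_T\in\hat C(X_T)\}$, by definition of $\hat C$ in \eqref{cov3}, is the event $\{\hat{\eta}_{P,Y_T}(X_T) > \hat t_{Y_T,\alpha}\}$ (with a strict inequality in the algorithm versus the non-strict one in Lemma 1 — I would either align these or note that the quantile construction places mass at the evaluated scores so the weak inequality is what the coverage bound actually controls; using $\hat C_{j,\alpha}=\{x:\hat\eta_{P,j}(x)\ge\hat t_{j,\alpha}\}$ as in the asymptotics section makes this seamless). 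Thus conditioning on $Y_T=j$ gives exactly $\mathbb{Q}(Y_T\in\hat C(X_T)\mid Y_T=j)=\mathbb{Q}(\hat\eta_{P,j}(X_T)\ge \hat t_{j,\alpha}\mid Y_T=j)\ge 1-\alpha$ for every $j$.

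From there the marginal bound follows by the law of total probability:
\[
\mathbb{Q}(Y_T\in\hat C(X_T)) = \sum_{j=1}^{K}\pi_{Q,j}\,\mathbb{Q}(Y_T\in\hat C(X_T)\mid Y_T=j) \geq \sum_{j=1}^{K}\pi_{Q,j}(1-\alpha) = 1-\alpha,
\]
since $\sum_j\pi_{Q,j}=1$. This is the same reduction used by \citet{lei2014classification}: class-specific coverage implies marginal coverage.

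The step where the $\CSPD$ assumption is genuinely needed — and the one I expect to require the most care — is verifying that the calibration scores feeding into $\hat t_{j,\alpha}$ are weighted exchangeable with the correct weight functions, so that Lemma 1 applies at all. Concretely, I would invoke Lemma 2 of \citet{tibshirani2019conformal} (cited in Appendix A.1): under $\CSPD$, for a fixed class $j$, the covariates of the class-$j$ calibration points from the source have density proportional to $P_{X|Y=j}$ and those from the target (including the test point $X_T$ conditioned on $Y_T=j$) have density proportional to $Q_{X|Y=j}$, hence the collection $\{X_i : i\in\mathcal R^j\cup\{T\}\}$ is weighted exchangeable with initial weight functions $w_{ij}\equiv 1$ on the source indices and $w_{ij}=w^j=dQ_{X|Y=j}/dP_{X|Y=j}$ on the target indices. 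Any measurable transformation of these — in particular the conformity scores $\hat\eta_{P,j}(X_i)$, with $\hat\eta_{P,j}$ frozen given $\mathcal S_1$ — preserves weighted exchangeability. The subtle point is that the relevant conditioning is on $Y_T=j$ and on the full label configuration of the calibration set, so that "class $j$ points" is a well-defined deterministic index set $\mathcal R^j$; I would make explicit that we work conditionally on all the labels (equivalently, the $X_i$'s within class $j$ are i.i.d.\ from the respective class-conditional laws), which is exactly the regime in which the permutation argument behind \eqref{weight} and Lemma~\ref{lemmaTib} is valid. Once weighted exchangeability is in hand, the weights $\tilde w_{ij}$ of \eqref{weight} are precisely the $p_i^w$ of Lemma~\ref{lemmaTib}, Lemma~1 applies verbatim, and the theorem follows by the averaging argument above.
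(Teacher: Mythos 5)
Your proposal is correct and takes essentially the same route as the paper: the paper's proof simply notes that on $\{Y_T=j\}$ the event $Y_T\in\hat C(X_T)$ is the class-$j$ thresholding event controlled by Lemma 1 and concludes immediately. The extra care you supply --- the explicit law-of-total-probability averaging over classes, working conditionally on the labels so that weighted exchangeability of the class-$j$ calibration covariates (which really only needs absolute continuity and the i.i.d.\ class-conditional structure, not the full $\CSPD$ condition) justifies invoking Lemma 1, and flagging the strict inequality in Algorithm 1 versus the weak inequality in Lemma 1 --- consists of details the paper leaves implicit, and only makes the argument more complete.
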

\begin{proof}
By construction of Algorithm \hyperref[alg1]{1}, $Y_{T}\in \hat{C}(X_T)$ is equivalent to 
$$\hat{\eta}_{P,j}(X_{T}) \geq \text{Quantile} \left(\alpha, \sum_{{i\in \mathcal{R}^j
   }}\tilde{w}_{ij} \delta_{\hat{\eta}_{P,j}(x_i)}+\tilde{w}_{Tj}\delta_{\infty}\right).$$
   Applying Lemma \ref{lemma1}, we immediately have the result.
\end{proof}
\begin{proposition}
    In general, for each class $j$, if \(\hat{\tilde{w}}(\cdot) \neq \tilde{w}(\cdot)\), define \(\Delta w_j = \frac{1}{2} \mathbb{E}|\hat{\tilde{w}}_{ij}(X) - \tilde{w}_{ij}(X)|\). Assume we have \(\mathbb{E}[\hat{\tilde{w}}_{ij}(X)] < \infty\). In this case, the coverage is lower bounded by \(1 - \alpha - \max_j\Delta w_j\),
 \[
\mathbb{Q}\left( \hat{\eta}_{P,j}(X_{T}) \geq \hat{t}_{j,\alpha} \right) \geq 1-\alpha- \max_j\Delta w_j,
\]
\end{proposition}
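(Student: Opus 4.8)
The plan is to reduce the estimated-weight coverage to the exact guarantee of Lemma~\ref{lemma1}, at a cost equal to the total-variation distance between the true and the estimated calibration weight measures, and then identify that cost with $\Delta w_j$. Fix a class $j$ and condition on the $\sigma$-field $\mathcal E$ generated by the \emph{unordered} multiset $\{(\hat\eta_{P,j}(x_i),w^j(x_i),\hat w^j(x_i)):i\in\mathcal R^j\cup\{T\}\}$ — that is, the values of the scores and of the true and estimated likelihood ratios, but not the information of which element is the test point. Since $\hat\eta_{P,j}$ is built only from $\mathcal S_1$, the weighted-exchangeability argument behind Lemma~\ref{lemma1} (and Lemma~\ref{lemmaTib}), which is where the absolute-continuity and $\CSPD$ hypotheses enter, shows that, conditionally on $\mathcal E$, the law of $\hat\eta_{P,j}(X_T)$ is exactly the true normalized-weight measure $\mu_j:=\sum_{i\in\mathcal R^j}\tilde w_{ij}\delta_{\hat\eta_{P,j}(x_i)}+\tilde w_{Tj}\delta_\infty$; note $\mu_j$ is $\mathcal E$-measurable although the identity of the test point is not.

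The remaining steps are short. Let $\hat\mu_j$ be the estimated normalized-weight measure (i.e.\ \eqref{weight} with $w^j$ replaced by $\hat w^j$), so that the threshold actually used is $\hat t_{j,\alpha}=\text{Quantile}(\alpha;\hat\mu_j)$, which is $\mathcal E$-measurable; by the defining property of the quantile, $\hat\mu_j([\hat t_{j,\alpha},\infty])\ge 1-\alpha$. Writing $\varepsilon_j:=\tfrac12\sum_{i\in\mathcal R^j\cup\{T\}}|\hat{\tilde w}_{ij}-\tilde w_{ij}|$, which is $\mathcal E$-measurable, we have $\|\mu_j-\hat\mu_j\|_{\mathrm{TV}}\le\varepsilon_j$ (with equality when the scores are distinct, ties only helping). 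Hence, conditionally on $\mathcal E$ and on $Y_T=j$, $\mathbb Q(\hat\eta_{P,j}(X_T)\ge\hat t_{j,\alpha})=\mu_j([\hat t_{j,\alpha},\infty])\ge\hat\mu_j([\hat t_{j,\alpha},\infty])-\varepsilon_j\ge 1-\alpha-\varepsilon_j$. Taking expectations — legitimate because $\mathbb E[\hat{\tilde w}_{ij}]<\infty$ makes $\varepsilon_j$ integrable — gives the class-conditional bound $\mathbb Q(\hat\eta_{P,j}(X_T)\ge\hat t_{j,\alpha}\mid Y_T=j)\ge 1-\alpha-\mathbb E[\varepsilon_j]$, where $\mathbb E[\varepsilon_j]$ is the expected total-variation weight discrepancy $\Delta w_j$. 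The set-valued statement then follows exactly as Theorem~\ref{thm1} followed from Lemma~\ref{lemma1}: $\mathbb Q(Y_T\in\hat C(X_T))=\sum_j\mathbb Q(Y_T=j)\,\mathbb Q(\hat\eta_{P,j}(X_T)\ge\hat t_{j,\alpha}\mid Y_T=j)\ge 1-\alpha-\sum_j\mathbb Q(Y_T=j)\Delta w_j\ge 1-\alpha-\max_j\Delta w_j$.

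I expect the main obstacle to be making the conditioning step rigorous: one has to verify that, conditionally on the unordered multiset, the test point's score is distributed precisely as $\mu_j$, which requires re-running (not merely citing) the weighted-exchangeability argument of \citet{tibshirani2019conformal} and handling ties among scores and the atom at $\infty$ with the same care as in the proof of Lemma~\ref{lemma1} — the $\delta_\infty$ mass being harmless since $\hat\eta_{P,j}(X_T)<\infty$ almost surely, so it can only raise the cutoff. One must also keep the total-variation inequality one-sided, so that replacing $\hat\mu_j$ by $\mu_j$ can only lower the coverage by $\varepsilon_j$ and never inflate it. A last, more cosmetic point is reconciling $\mathbb E[\varepsilon_j]=\tfrac12\,\mathbb E\sum_i|\hat{\tilde w}_{ij}-\tilde w_{ij}|$ with the single-index expression $\Delta w_j=\tfrac12\,\mathbb E|\hat{\tilde w}_{ij}-\tilde w_{ij}|$ in the statement: within class $j$ the summands are identically distributed by exchangeability (and symmetric in the calibration covariates provided $\hat w^j$ is), so the two agree once the normalization folded into \eqref{weight} is accounted for; alternatively, $\varepsilon_j$ can be controlled by propagating an $L_1$ error on $\hat w^j$ directly through the weight map \eqref{weight}, bypassing the combinatorics.
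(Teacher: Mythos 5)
Your proof is correct in substance, but it takes a different route from the paper: the paper's entire proof is a one-line invocation of Theorem 3 of \citet{lei2021conformal}, applied classwise to get $\mathbb{Q}(\hat{\eta}_{P,j}(X_T)\ge\hat{t}_{j,\alpha}\mid Y_T=j)\ge 1-\alpha-\Delta w_j$ and then aggregated over $j$, whereas you re-derive that estimated-weight robustness bound from scratch — conditioning on the unordered multiset (the weighted-exchangeability argument behind Lemma~\ref{lemma1}) and comparing the true and estimated normalized-weight measures in total variation. The two arguments rest on the same mechanism (coverage loss controlled by the TV distance between the true and plug-in weight measures), so yours is essentially an in-situ proof of the cited theorem; what it buys is a self-contained verification that the bound really transfers to this paper's setting, where the calibration set mixes source points (initial weight $1$) and labeled target points, rather than the single-unlabeled-test-point covariate-shift setting for which Lei and Cand\`es state their result — a nontrivial adaptation the paper leaves implicit. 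What the citation buys is brevity, and it also sidesteps two points you flag but do not fully close: (i) conditionally on the multiset, the law of $\hat{\eta}_{P,j}(X_T)$ puts its mass $\tilde w_{Tj}$ at the test point's actual finite score, not at $\infty$, so your identity with $\mu_j$ is really a one-sided (conservative) inequality, as you acknowledge; and (ii) the identification $\mathbb{E}[\varepsilon_j]=\Delta w_j$ requires translating the summed discrepancy of normalized weights into the single-function $L_1$ error of the likelihood-ratio estimate — the quantity actually appearing in \citet{lei2021conformal} — and the paper's own statement of $\Delta w_j$ in terms of $\tilde w_{ij}$ is itself a loose transcription of that quantity, so this reconciliation is a genuine (if minor) loose end in both your write-up and the original.
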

\begin{proof}
    This proposition can be directly proved by using the Theorem 3 in \cite{lei2021conformal}, which yields for each class $j$,
    \[
\mathbb{Q}\left( \hat{\eta}_{P,j}(X_{T}) \geq \hat{t}_{j,\alpha}|Y_T = j \right) \geq 1-\alpha- \Delta w_j.
\]
\end{proof}

\begin{figure*}[t]
    \centering
    \includegraphics[width = \textwidth]{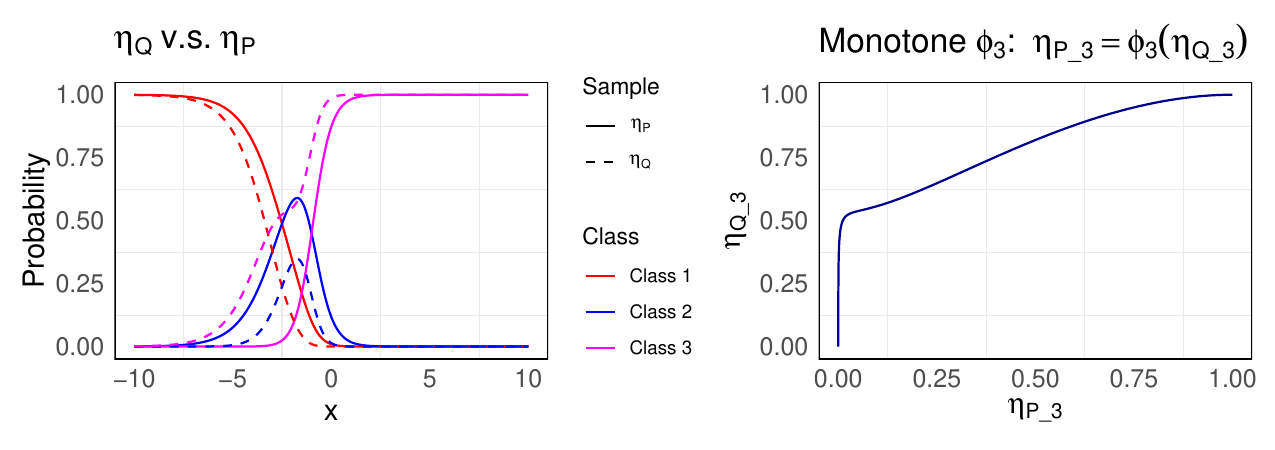}
    \caption{An illustration of the simulation study when $r = 2$.}
    \label{illu}
\end{figure*}

\subsection{Proof of Theorem 2}\label{Appendix:proof2}
\begin{theorem}
Define $m = |\mathcal{S}_1|$ as the size of data set used to estimate $\hat{\eta}_{P,j}$, and $n_{j} = |\mathcal{R}^j|$ as the size of calibration set for class $j$. Under $\gCSPD$ assumption at $\alpha$ and assumptions \textbf{(A)}, \textbf{(B)}, if $Q_{X|Y=j}$ is absolutely continuous with respect to $P_{X|Y=j}$ and $w^j$ are bounded, then for each $r>0$, there exists a positive constant $c$ such that for $m$ and $n_{j}$ large enough, with probability at least $1-\theta_{m}-n_{j}^{-r}$
\[\mathbb{Q}_j(\hat{C}_{j,\alpha}\triangle C_{j,\alpha})\leq c\left\{ \delta_{m}^\lambda  + \left( \frac{\log n_{j}}{n_{j}}\right)^{\frac{1}{2}}\right\}.\]
\end{theorem}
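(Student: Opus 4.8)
The plan is to control $\mathbb{Q}_j(\hat{C}_{j,\alpha}\triangle C_{j,\alpha})$ by decomposing the symmetric difference according to the two sources of error: the estimation error of the score $\hat{\eta}_{P,j}$ (governed by Assumption \textbf{(A)}) and the error in the empirical weighted quantile $\hat{t}_{j,\alpha}$ relative to the population threshold $t_{j,\alpha}$ (governed by Assumption \textbf{(B)} and a concentration argument). Writing $\hat{C}_{j,\alpha}=\{x:\hat{\eta}_{P,j}(x)\ge\hat{t}_{j,\alpha}\}$ and $C_{j,\alpha}=\{x:\eta_{P,j}(x)\ge t_{j,\alpha}\}$, a point lies in the symmetric difference only if either $|\hat{\eta}_{P,j}(x)-\eta_{P,j}(x)|$ is large, or $\eta_{P,j}(x)$ lies within a band around $t_{j,\alpha}$ of width roughly $\delta_m+|\hat{t}_{j,\alpha}-t_{j,\alpha}|$. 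On the event $\{\|\hat{\eta}_{P,j}-\eta_{P,j}\|_\infty<\delta_m\}$, which by \textbf{(A)} has probability at least $1-\theta_m$, the first contribution vanishes and we get the inclusion
\[
\hat{C}_{j,\alpha}\triangle C_{j,\alpha}\subseteq\{x:|\eta_{P,j}(x)-t_{j,\alpha}|\le \delta_m+|\hat{t}_{j,\alpha}-t_{j,\alpha}|\}.
\]
Assumption \textbf{(B)} then bounds the $\mathbb{Q}_j$-measure of this band by $b_2\,(\delta_m+|\hat{t}_{j,\alpha}-t_{j,\alpha}|)^\lambda$ (for $m,n_j$ large enough that the argument is within $[-d_0,d_0]$), so it remains to show $|\hat{t}_{j,\alpha}-t_{j,\alpha}|=O_{\mathbb{Q}}((\log n_j/n_j)^{1/2})$ with the stated probability, and then absorb the sum into $c\{\delta_m^\lambda+(\log n_j/n_j)^{1/2}\}$ using $(a+b)^\lambda\lesssim a^\lambda+b^\lambda$ (or $\lesssim a^\lambda + b$ when $b$ is small and $\lambda$ could be $\le 1$ — one handles $\lambda\ge 1$ and $\lambda<1$ separately, the latter being the easier direction since then $b^\lambda$ dominates $b$).

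The core of the argument is thus the quantile-convergence step. Here I would work with the population weighted empirical CDF and its sample version: define $\hat{G}_j(t)=\sum_{i\in\mathcal{R}^j}\tilde{w}_{ij}\,\mathds{1}[\hat{\eta}_{P,j}(x_i)\le t]$ (normalized so the weights on $\mathcal{R}^j$ plus $\tilde w_{Tj}$ sum to one), and note that $\hat{t}_{j,\alpha}$ is (up to the discretization by $\delta_{\infty}$, which only helps) the empirical $\alpha$-quantile of this distribution. Under $\gCSPD$ at $\alpha$, the key observation (the same one that makes Lemma \ref{lemma1} and Theorem \ref{thm1} work) is that $t_{j,\alpha}$, defined by $\mathbb{Q}(\{x:\eta_{P,j}(x)\ge t_{j,\alpha}\})=\alpha$, is the population target: the weighting scheme is designed precisely so that the expectation of the weighted indicator reproduces the target-domain probability $\mathbb{Q}_j(\eta_{P,j}(X)\le t)=G_j(t)$. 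I would first replace $\hat{\eta}_{P,j}$ by $\eta_{P,j}$ inside $\hat{G}_j$ at a cost of a shift by $\delta_m$ in the argument (again using \textbf{(A)}), reducing to a weighted empirical process in the fixed scores $\eta_{P,j}(x_i)$. Then a DKW-type / bounded-differences concentration inequality for the weighted empirical CDF — valid because the $\tilde{w}_{ij}$ are bounded, so each summand has bounded influence and $\sum_i\tilde{w}_{ij}^2=O(1/n_j)$ — gives $\sup_t|\hat{G}_j(t)-\E\hat{G}_j(t)|=O((\log n_j/n_j)^{1/2})$ with probability $1-n_j^{-r}$ (the $\log n_j$ and the constant $c$ depending on $r$ and the weight bound). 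Inverting this uniform CDF bound through the lower margin bound in \textbf{(B)} — $|G_j(t_j+d)-G_j(t_j)|\ge b_1|d|^\lambda$ — converts a deviation of order $\epsilon_{n_j}:=(\log n_j/n_j)^{1/2}$ in CDF value into a deviation of order $(\epsilon_{n_j}/b_1)^{1/\lambda}$ in the quantile; feeding this back into the band bound $b_2(\delta_m+|\hat t_{j,\alpha}-t_{j,\alpha}|)^\lambda$ the $1/\lambda$ and $\lambda$ exponents cancel and one recovers a clean $O(\delta_m^\lambda+\epsilon_{n_j})$ rate.

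The main obstacle, and the place I would spend the most care, is the concentration of the \emph{weighted} empirical CDF: unlike in \citet{tibshirani2019conformal}, the weights $\tilde{w}_{ij}$ are not independent of the data — each $\tilde{w}_{ij}$ is a ratio of elementary symmetric polynomials in $\{w^j(x_c):c\in\mathcal{R}^j\cup\{T\}\}$ and so depends on \emph{all} the calibration points jointly. I would handle this either by (i) observing that, conditionally, the collection $(\hat\eta_{P,j}(x_i),\tilde w_{ij})_{i\in\mathcal R^j}$ is exchangeable so that swapping any two calibration points changes $\hat G_j(t)$ by $O(1/n_j)$ (since individual weights are $O(1/n_j)$ under the boundedness hypothesis), permitting McDiarmid; or (ii) showing that with high probability $\tilde w_{ij}$ is uniformly close to the ``independent-weights'' proxy $w^j(x_i)/(\sum_{c}w^j(x_c))$ — which is the $n=0$ simplification displayed in Section 3.2 — and then invoking a standard weighted-DKW bound for that proxy plus a perturbation argument. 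Either route requires quantifying that no single weight dominates, which is exactly what the ``$\tilde w_{ij}$ are bounded'' (equivalently $w^j$ bounded) assumption buys us. The remaining steps — the margin inversion, the $(a+b)^\lambda$ split, and collecting the $\theta_m$ and $n_j^{-r}$ failure probabilities by a union bound — are routine.
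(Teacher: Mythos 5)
Your proposal is correct and follows essentially the same route as the paper's proof: restrict to the event on which assumption \textbf{(A)} gives $\|\hat{\eta}_{P,j}-\eta_{P,j}\|_\infty\le\delta_m$ and the weighted empirical CDF is uniformly within $c_r(\log n_j/n_j)^{1/2}$ of $G_j$, invert through the lower bound in \textbf{(B)} to get $|\hat t_{j,\alpha}-t_{j,\alpha}|\le \delta_m+O\bigl((\log n_j/n_j)^{1/(2\lambda)}\bigr)$, and then bound the measure of the resulting band around $t_{j,\alpha}$ via the upper bound in \textbf{(B)}, exactly as in the paper. The only divergence is the concentration step for the weighted empirical CDF, where the paper cites Proposition 3.1 of \citet{chen2019nearest} combined with the effective-sample-size bound $\|\tilde w\|_1^2/\|\tilde w\|_2^2\gtrsim n_j$ from the boundedness of the weights, whereas you sketch a McDiarmid/proxy-weight argument; your explicit attention to the data-dependence of the $\tilde w_{ij}$ is a legitimate subtlety that the paper's citation-based treatment passes over quickly.
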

%where $\hat{C}_{j,\alpha}\triangle C_{j,\alpha}:= (C_{j,\alpha}\setminus \hat{C}_{j,\alpha})\cup (\hat{C}_{j,\alpha}\setminus C_{j,\alpha})$ is the symmetric difference of $\hat{C}_{j,\alpha}$ and $C_{j,\alpha}$
%$$\mathbb{Q}_j(\hat{C}_{j,\alpha}\triangle C_{j,\alpha})\leq c\left\{ \delta_m^\lambda  + \left( \frac{\log n}{n}\right)^{\frac{1}{2}}\right\}.$$
%(might be useless, double check) Under (B'),
%$$\mathbb{Q}(\hat{C}_{P,Q,\alpha}\triangle C_{P,Q,\alpha})\leq c\left\{ \delta_m^\lambda  + K \left( \frac{\log n}{n}\right)^{\frac{1}{2}}\right\}.$$
\begin{proof}
Denote $$\hat{G}_j(t) = \sum_{i\in (\mathcal{S}2\cap \mathcal{I}^j)\cup (\mathcal{T}\cap \mathcal{I}^j)\cup {T}} \tilde{w}_{ij}\mathds{1}\left\{\eta_{P,j}(x_i) \leq t\right\}$$ the weighted empirical distribution function. Let $\hat{\mathbb{Q}}_j$ be the probability measure corresponding to $\hat{G}_j$. Consider the following event 
\begin{align*}
    E_r=\Biggl\{&\|\eta_{P,j}-\eta_{P,j}\|_\infty\leq\delta_m,\\
    &\sup_t |G_j(t)-\hat{G}_j(t)|\leq c_r(\log n_j/n_j)^{\frac{1}{2}} \Biggr\},
\end{align*}
which has probability at least $1-\theta_m-n_j^{-r}$ for constant $c_r$ depending on $r$.
To see this, the first inequality in event $E_r$ is given by assumption \textbf{(A)}. 

For the second inequality, since $\tilde{w}_{ij} > 0$ are bounded, let's assume $a < \tilde{w}_{ij} < b$. Then we apply this bound
$$\frac{(\sum_i \tilde{w}_{ij})^2}{(\sum_i \tilde{w}_{ij}^2)} = \frac{\|\tilde{w}_{ij}\|_1^2}{\|\tilde{w}_{ij}\|_2^2} \geq \frac{\|\tilde{w}_{ij}\|_1}{\|\tilde{w}_{ij}\|_\infty} > \frac{n_j a}{b}$$
on the right hand side of the weighted empirical distribution inequality (Proposition 3.1 in \citet{chen2019nearest}), for $n_j$ large enough, we have $\mathbb{Q}_j(\sup_t |G_j(t)-\hat{G}_j(t)| > c_r(\log n_j/n_j)^{\frac{1}{2}}) \leq \frac{6}{\log n_j} n_j^{-\frac{2a}{9b}c_r^2+\frac{1}{2}}\leq n_j^{-r}$.

Define $L_j(t)=\left\{ x: \eta_{P,j}(x)\leq t\right\}$ and $\hat{L}_j(t)=\left\{ x: \hat{\eta}_{P,j}(x)\leq t\right\}$.
Let $t_{j,\alpha} =G_j^{-1}(\alpha)$ be the ideal cut-off value for $\eta_{P,j}$. If $t= t_{j,\alpha} - \delta_m - \left[2c_rb_1^{-1}\sqrt{\frac{\log n_j}{n_j}}\right]^{\frac{1}{\lambda}}$, then we have
\begin{align*}
    \hat{\mathbb{Q}}_j\left[\hat{L}_j(t)\right] & \leq \hat{\mathbb{Q}}_j\left[\hat{L}_j(t+\delta_m)\right]\\
    &=\hat{G}_j(t+\delta_m)\\
    &\leq G_j(t+\delta_m)+c_r\sqrt{\frac{\log n_j}{n_j}} \\
    &\leq G_j \Bigl\{ t_{j,\alpha}- \left[2c_rb_1^{-1}\sqrt{\frac{\log n_j}{n_j}}\right]^{\frac{1}{\lambda}} \Bigr\} + c_r\sqrt{\frac{\log n_j}{n_j}}\\
    &\leq G_j(t_{j,\alpha})-c_r\sqrt{\frac{\log n_j}{n_j}}\\
    &=1 - \alpha - c_r\sqrt{\frac{\log n_j}{n_j}}\\
    &<1 - \alpha - n_j^{-1}\\
    &\leq \hat{\mathbb{Q}}_j\left[\hat{L}_j(\hat{t}_{j,\alpha})\right] 
\end{align*}
Therefore 
\begin{equation}\label{thmeq1}
    \hat{t}_{j,\alpha} \geq t_{j,\alpha} - \delta_m - \left
(2c_rb_1^{-1}\sqrt{\frac{\log n_j}{n_j}}\right)^{\frac{1}{\lambda}} 
\end{equation}
Similarly, we can show the reverse inequality 
\begin{equation}\label{thmeq2}
    \hat{t}_{j,\alpha} \leq  t_{j,\alpha} + \delta_m + \left(2c_rb_1^{-1}\sqrt{\frac{\log n_j}{n_j}}\right)^{\frac{1}{\lambda}}
\end{equation}
Combining \eqref{thmeq1} and \eqref{thmeq2}, we have $|\hat{t}_{j,\alpha} - t_{j,\alpha}| \leq \delta_m + \left(2c_rb_1^{-1}\sqrt{\frac{\log n_j}{n_j}}\right)^{\frac{1}{\lambda}}.$
Now on event $E_r$, we have
\begin{align*}
    & \mathbb{Q}_j(\hat{C}_{j,\alpha}\triangle C_{j,\alpha}) \\
    & = \mathbb{Q}_j(\hat{\eta}_{P,j}(X)\geq \hat{t}_{j,\alpha},\eta_{P,j}(X)<t_{j,\alpha})\\
    &\leq \mathbb{Q}_j \Bigl\{ t_{j,\alpha} - 2\delta_m - \Bigl[2c_rb_1^{-1}\sqrt{\frac{\log n_j}{n_j}}\Bigr]^{\frac{1}{\lambda}} < \eta_{P,j}(X) < t_{j,\alpha} \Bigr\}\\
    &= G_j(t_{j,\alpha}) - G_j\Bigl\{ t_{j,\alpha} - 2\delta_m - \Bigl[2c_rb_1^{-1}\sqrt{\frac{\log n_j}{n_j}}\Bigr]^{\frac{1}{\lambda}} \Bigr\}\\
    &\leq b_2 \Bigl\{2\delta_m + \Bigl[2c_rb_1^{-1}\sqrt{\frac{\log n_j}{n_j}}\Bigr]^{\frac{1}{\lambda}}\Bigr\}^\lambda\\
    &\leq c \Bigl(\delta_m^\lambda + \sqrt{\frac{\log n_j}{n_j}}\Bigr)
\end{align*}
Here $c$ is some constant depending on $c_r, b_1, b_2, \lambda$. The next-to-last inequality follows from assumption \textbf{(B)} and holds when $m$ and $n$ are large enough so that $2\delta_m + \left[2c_rb_1^{-1}\sqrt{\frac{\log n_j}{n_j}}\right]^{\frac{1}{\lambda}} \leq d_0$

\end{proof}

\begin{figure*}[t]
    \centering
    \includegraphics[width = \textwidth]{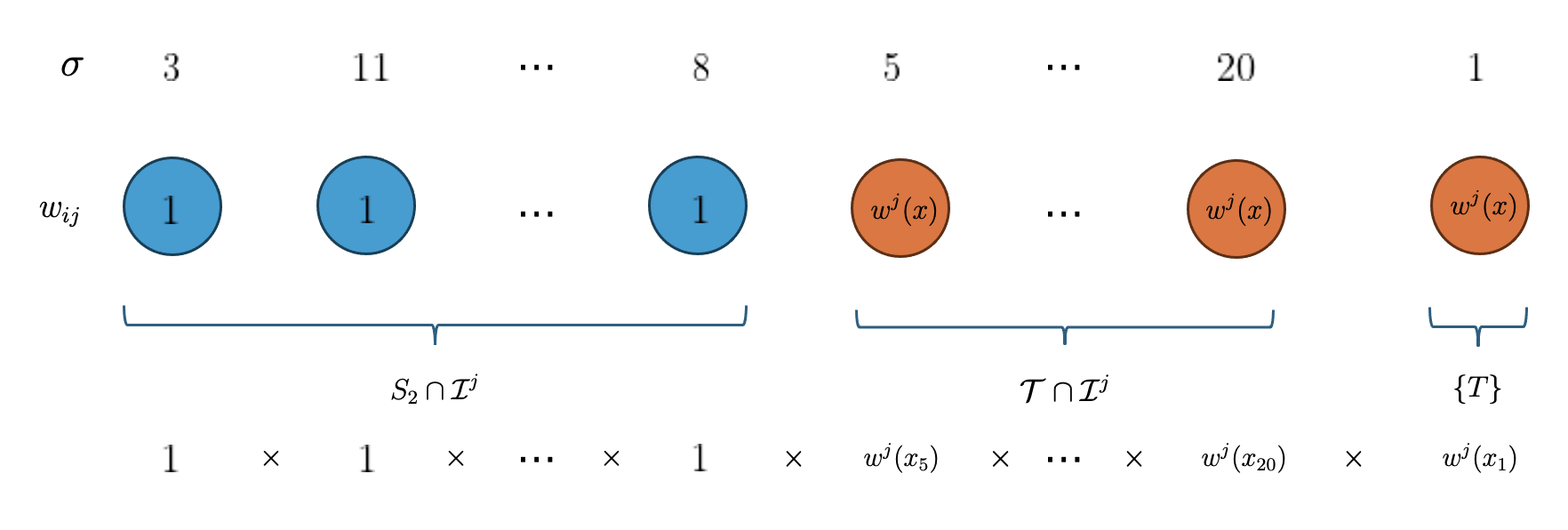}
    \caption{An illustration for one term in the weight calculation under a specific permutation}
    \label{weight_illu}
\end{figure*}
\section{NUMERICAL STUDIES}
\subsection{Overview}
All experiments were conducted on a desktop with an Intel i7-8700 CPU and 16 GB of RAM, using the R parallel computing environment without GPU support. The simulation study involved 1000 replications, with each replication cycling through $r = 1, 1.2, 1.4, 1.6, 1.8,$ and $2.0$, and required approximately 7 hours to complete. For the semi-synthetic experiment can be completed in less than 15 minutes.

For efficiency, in Algorithm \hyperref[alg1]{1},
we compute the cutoff $\hat{t}_{j,\alpha}$ as $\text{Quantile} \left(\alpha, \sum_{{i\in \mathcal{R}^j}}\tilde{w}_{ij} \delta_{\hat{\eta}_{P,j}(x_i)}\right).$ Comparing to the original formula \eqref{threshold}, we omit the last term which place point mass on $\infty$. The risk of removing such term can be negligible in practice. And that means, for every test point instance, we no longer need to update the weights in \ref{weight}, which greatly alleviate the computational burden.

\subsection{Illustration for the Simulation Study}

\textbf{Relabeling process in the simulation study.} For classes 1 and 2, we set $\phi^{-1}(t) = r^t$, where 
$\eta_{Q,j}(x) = \phi^{-1}(\eta_{P,j}(x))$. For class 3, the posterior probability $\eta_{Q,3} (x) = 1 - \eta_{Q,2}(x) - \eta_{Q,1}(x)$. We can verify that there exist an increasing function $\phi_3$ for class 3 (see the right panel of Figure \ref{illu}) that satisfies $\eta_{P,3}(x)=\phi_3(\eta_{Q,3}(x)).$ Then we sample the label based on $\eta_{Q,j}(x)$

\begin{remark}
 One nice property of binary classification is that if $\CSPD$ holds for class 1, then it automatically holds for class 2. This is because once there exists a strictly increasing function $\phi_1$ such that $\eta_{P,1}(x)=\phi_1(\eta_{Q,1}(x)),$ we would immediately have another strictly increasing function $\phi_2(t) = 1 - \phi_1(1-t)$ such that $\phi_2(\eta_{Q,2}(x)) = 1 - \phi_1(1 -\eta_{Q,2}(x))=\eta_{P,2}(x).$ This nice property no longer holds in the multicategory setting in general. Nonetheless, there are scenarios where $\CSPD$ holds for all classes.  
\end{remark}

\subsection{Supplementary Details of the Maternal Health Risk data}
The Maternal Health Risk data \citep{misc_maternal_health_risk_863}, collected from various health facilities in rural Bangladesh through an IoT-based system, comprises $N = 1013$ instances. Each instance includes a class label $Y$ (risk intensity level during pregnancy) and a 6-dimensional covariate $X$ (age, systolic blood pressure, diastolic blood pressure, blood glucose levels, body temperature, and heart rate).

We repeat the experiment 1000 times. For each time, we split the data into $D_{\text{Source}}$, $D_{\text{Target Train}}$, and $D_{\text{Target Test}}$
with a $5:1:4$ ratio. To simulate $\gCSPD$, we start by training an XGBoost model on the entire dataset to robustly estimate the posterior probabilities $\hat{\eta}_{P,j}$. A transformation function $\phi^{-1}_j(x) = x^{1.2}$ is then applied to the posterior probabilities $\hat{\eta}_{Q,j}$ for the first two classes $j = 1,2$. The remaining  procedure is the same as in the simulation study, that is, we set $\hat{\eta}_{Q,3} = 1- \hat{\eta}_{Q,1} - \hat{\eta}_{Q,2}$. And draw labels for the target sample, based on $\hat{\eta}_{Q,j}$. This procedure ensures the shift in all classes satisfies $\gCSPD$ for most of the $\alpha$ levels (See figure \ref{fig5} (a)).

\subsection{Weight Estimation}
Here, we describe the estimation of the covariate likelihood ratio $w^j = dQ_{X|Y=j}/dP_{X|Y=j}$, following the strategy used in \cite{tibshirani2019conformal}. For each class $j$, we augment the covariates to the feature-class pairs $(X_i, C_i)$, where $C_i = 0$ for the target training sample and $C_i = 1$ for the source training sample. Then we train an XGBoost model on the augmented data, to obtain the estimated probabilities $\mathbb{P}(C=i|X=x)$. Noting that\[\frac{\mathbb{P}(C=1|X=x)}{\mathbb{P}(C=0|X=x)} = \frac{\mathbb{P}(C=1)}{\mathbb{P}(C=0)}\cdot\frac{dQ_{X|Y=j}}{dP_{X|Y=j}}(x),\]
we can take the left-hand side as a estimate of initial weight function $w^j(x)$ (since it is proportional to the covariate likelihood ratio).

\subsection{A Toy Example of Weight Calculation}
As a toy example, consider the permutation $\sigma = (3,11,\dots,8,5,\dots,20,1)$ which satisfies $\sigma(T) = i =  1$. The first $N_j^S = |(\mathcal{S}_2\cap \mathcal{I}^j)|$ entries of the permutation involve data points $x_3, x_{11},\dots, x_8$, which are evaluated using the function $w_{ij}(x) = 1$, and the remaining $N_j^T+1 = |(\mathcal{T}\cap \mathcal{I}^j)\cup \{T\}|$ entries involve data points $x_5, x_{20},\dots, x_{1}$, which are evaluated with the $w_{ij}(x) = w^j(x)$ function. The product of $w(x_5), w(x_{20}),\dots, w(x_1)$ then forms one term in the numerator of \eqref{weight} (see Figure \ref{weight_illu} for an illustration). 

\section{MULTI-SOURCE}\label{subsec:methodMS}

In this section, we discuss an extension of our method when multiple source samples are available. First of all, we may no longer set the initial weight for those instances from the source samples to $1$, as was done before; see the definition of the initial weight function before. Instead, we set the initial weight function to be 1 for one reference source sample (typically we recommend the largest source sample, indexed as the first source sample $P_1$). Next, the initial weight functions for the other source samples are set to be $dP_{k,x|y=j}/dP_{1,x|y=j}$, defined as the covariate likelihood ratio of the $k$th source distribution over the reference source distribution, conditional on class $j$. Lastly, initial weight functions for the target data are set to $dQ_{x|y=j}/dP_{1,x|y=j}$. With all the initial weight functions set, we can define the weight for each training data instance using a formula similar to (\ref{weight}). However, its definition can no longer be cast as elementary symmetric polynomials, and hence we may no longer use the Newton's Identity to simplify the computation.

There exist a couple of ways to bypass the computational challenge. One approach is to consider a mixture distribution of all source samples, defined as $P_M = \sum r_k P_k$, where $r_k$ is the proportion of $k$-th source sample among all the source samples. In this case, we proceed with our original method by treating the source data as one single distribution. 

Another approach is one of divide and conquer. We may apply Algorithm \hyperref[alg1]{1} separately for each source sample. Using the $l$-th source, we obtain $\hat{C}_\ell(x) = \{ j : \hat{\eta}_{p,j}^{(\ell)} (x) > \hat{\eta}_{j,\alpha}^{(\ell)} \}$. One intuitive way to combine these prediction sets $\hat{C}_\ell(x)'s$ is to find their Steiner centroid, $\hat{S}$, which minimizes the total symmetric difference to all these sets. Mathematically, we are looking for $\hat{S} = \arg\min_S \sum_{\ell=1}^m |S \triangle \hat{C}_\ell(x)|$. Alternatively, inspired by fusion learning \citep{shen2020fusion}, let $F_\ell$ be the empirical (weighted) cumulative distribution function of $\hat{\eta}_{p,j}^{(\ell)} (X_i)$ where $X_i$'s are class $j$ observations from the target training sample and the $\ell$-th ranking source samples. The fact that $\hat{C}_\ell(x) = \{j : F_\ell(\hat{\eta}_{p,j}^{(\ell)} (x)) > \alpha \}$ motivates to define the combined prediction set as $\{j : \frac{1}{M} \sum_{\ell=1}^M F_\ell(\hat{\eta}_{p,j}^{(\ell)} (\mathbf{x})) > \alpha \}$, assuming that all the sources are equally transferable to the target.

We choose not to pursue any of these extensions in this article. Computational refinement and theoretical studies of these extensions will be left as future research directions.

%%%%%%%%%%%%%%%%%%%%%%%%%%%%%%%%%%%%%%%%%%%%%%%%%%%%%%%%%%%%

\end{document}